\title{Solving Marginal MAP Problems with NP Oracles and Parity
  Constraints}
\author{
Yexiang Xue\\
Department of Computer Science\\
Cornell University\\
\texttt{yexiang@cs.cornell.edu}\\
\And
Zhiyuan Li\thanks{This research was done when Zhiyuan Li was an exchange
  student at Cornell University.}\\
Institute of Interdisciplinary Information Sciences\\
Tsinghua University\\
\texttt{lizhiyuan13@mails.tsinghua.edu.cn}\\
\And
Stefano Ermon\\
Department of Computer Science\\
Stanford University\\
\texttt{ermon@cs.stanford.edu}\\ 
\And
Carla P. Gomes, Bart Selman\\
Department of Computer Science\\
Cornell University\\
\texttt{\{gomes,selman\}@cs.cornell.edu}\\
}
\DeclareMathOperator*{\argmin}{argmin}
\begin{document}

\def\gm{{\mathcal{M}}}
\def\gmp{{\mathcal{M}^\prime}}
\def\gme{{\mathcal{E}}}
\def\gmep{{\mathcal{E}^\prime}}

\def\bn{{\mathcal{N}}}
\def\bnp{{\mathcal{N}^\prime}}

\def\n(#1){\bar{#1}}
\def\u{{\bf u}}
\def\a{{\bf a}}
\def\U{{\bf U}}
\def\v{{\bf v}}
\def\V{{\bf V}}
\def\x{{\bf x}}
\def\X{{\bf X}}
\def\y{{\bf y}}
\def\Y{{\bf Y}}
\def\z{{\bf z}}
\def\Z{{\bf Z}}

\newcommand{\mc}[2]{\mathbb{C}^{#1\times #2}}
\newcommand{\mr}[2]{\mathbb{R}^{#1\times #2}}
\newcommand{\mrt}[3]{\mathbb{R}^{#1\times #2\times #3}}
\newcommand{\mrN}{\mr{I_1}{I_2\times \dots I_N}}
\newcommand{\rank}{\mathrm{rank}}
\newcommand{\diag}{\mathrm{diag}}
\newcommand{\nf}[2]{\nicefrac{#1}{#2}}
\newcommand{\vct}[1]{\bm{#1}}
\newcommand{\mat}[1]{\bm{#1}}
\newcommand{\frob}[1]{\|#1\|_2}
\newcommand{\F}[1]{\|#1\|^2_F}
\newcommand{\p}[2]{\textrm{Pr}[ #1\in #2] }
\newcommand{\pe}[2]{\mathop{\textrm{Pr}}\limits_{#1}\left[#2\right] }
\newcommand{\pr}[1]{\textrm{Pr} \left[#1\right] }
\newcommand{\E}[1]{\mathbb{E}\left[#1\right]}

\newcommand{\setW}{{\mathcal{W}}}
\newcommand{\setX}{{\mathcal{X}}}
\newcommand{\setA}{{\mathcal{A}}}
\newcommand{\proofsketch}{\textit{Proofsketch. }}

\newcommand{\one}[1]{\mathbf{1}_{#1}}
\newcommand{\algoref}[1]{\textbf{Algorithm \ref{#1}}}

\newcommand{\stefano}[1]{{\color{red} [SE: {#1}]}}

\newtheorem{thm}{Theorem}[section]
\newtheorem{lem}[thm]{Lemma}
\newtheorem{crl}[thm]{Corollary}
\newtheorem{clm}[thm]{Claim}
\newtheorem{exmp}[thm]{Example}
\newtheorem{defi}[thm]{Definition}
\newtheorem{prop}[thm]{Proposition}

\maketitle

\begin{abstract}
Arising from many applications at the intersection of decision-making
and machine learning, Marginal Maximum A Posteriori (Marginal MAP)
problems unify the two main classes of inference, namely
\emph{maximization} (optimization) and \emph{marginal inference}
(counting), and are believed to have higher complexity than both of
them.
We propose $\mathtt{XOR\_MMAP}$, a novel approach to solve the
Marginal MAP problem, which represents the intractable counting
subproblem with queries to NP oracles, subject to additional parity
constraints.
$\mathtt{XOR\_MMAP}$ provides a constant factor approximation to the
Marginal MAP problem, by encoding it as a single optimization in a 
polynomial size of the original problem.
We evaluate our approach in several machine learning and
decision-making applications, and show that our approach outperforms
several state-of-the-art Marginal MAP solvers.

\end{abstract}

\section{Introduction}


Typical inference queries to make predictions and learn probabilistic 
models from data include the \emph{maximum a posteriori} (MAP)
inference task, which computes the most likely assignment of a set of
variables, as well as the \emph{marginal inference} task, which
computes the probability of an event according to the model.  Another
common query is the Marginal MAP (MMAP) problem, which involves both
\emph{maximization} (optimization over a set of variables) and
\emph{marginal inference} (averaging over another set of
variables). 

Marginal MAP problems arise naturally in many machine
learning applications. For example, learning latent variable models
can be formulated as a MMAP inference problem, where the goal is to
optimize over the model's parameters while marginalizing all the
hidden variables.
MMAP problems also arise naturally in the context of decision-making under uncertainty, where the goal is to find a decision
(optimization) that performs well on average across multiple
probabilistic scenarios (averaging).
%

%

The Marginal MAP problem is known to be
$\mbox{NP}^{\mbox{PP}}$-complete \cite{Park04MMAPComplexity}, which is
commonly believed to be harder than both MAP inference (NP-hard)
and marginal inference (\#P-complete).
As supporting evidence, MMAP problems are NP-hard even on tree
structured probabilistic graphical models \cite{LiuI13VariationalMMAP}.
Aside from attempts to solve MMAP problems exactly
\cite{Park2002MMAPExact,Radu14AndOrMMAP,Marinescu2015AOBB,Maua2012AnytimeMAP},
previous approximate approaches fall into two categories, in
general. The core idea of approaches in both categories is to
effectively approximate the intractable marginalization, which often involves averaging over an exponentially large number of scenarios.
One class of approaches
\cite{LiuI13VariationalMMAP,Jiang11MessagePassingMMAP,Wei15DecompositionBoundMMAP,Lee2016AOBB}
use variational forms to represent the intractable sum. Then
the entire problem can be solved with message passing algorithms,
which correspond to searching for the best variational approximation
in an iterative manner. 
As another family of approaches, Sample Average Approximation (SAA)
\cite{Sheldon10NetworkDesign,Xue15ScheduleCascade} uses a fixed set of
samples to represent the intractable sum, which then transforms the
entire problem into a restricted optimization, only considering a
finite number of samples.
Both approaches treat the optimization and marginalizing components
separately. However, we will show that by solving these two tasks in
an integrated manner, we can obtain significant computational
benefits.

Ermon et al. \cite{Ermon13Wish,Ermon14LowDensityParity} recently proposed an alternative approach to approximate intractable counting
problems. Their key idea is a mechanism to transform
a counting problem into a series of optimization problems,
each corresponding to the original problem subject to
randomly generated XOR constraints.
Based on this mechanism, they developed an algorithm providing a
constant-factor approximation to the counting (marginalization) problem.

We propose a novel algorithm, called $\mathtt{XOR\_MMAP}$, which
approximates the intractable sum with a series of optimization problems, which in turn are folded into the global optimization task. Therefore, we effectively reduce the original MMAP inference to \emph{a single joint  optimization} of polynomial size of the original problem.


We show that $\mathtt{XOR\_MMAP}$ provides a constant factor approximation to the Marginal MAP problem. Our approach also provides upper and lower bounds on the final result. The quality of the bounds can be improved incrementally with increased computational effort. 

%
We evaluate our algorithm on unweighted SAT instances and on 
weighted Markov Random Field models, comparing our algorithm with
variational methods, as well as sample average approximation. We also show the
effectiveness of our algorithm on applications in computer vision with
deep neural networks and in computational sustainability.
Our sustainability application shows how MMAP problems are also found
in scenarios of searching for optimal policy interventions to maximize
the outcomes of probabilistic models. 
%
%
As a first example, we consider a network design application to
maximize the spread of cascades \cite{Sheldon10NetworkDesign}, which
include modeling animal movements or information diffusion in social
networks.
In this setting, the marginals of a probabilistic decision model
represent the probabilities for a cascade to reach certain target
states (averaging), and the overall network design problem is to make
optimal policy interventions on the network structure to maximize the
spread of the cascade (optimization).
As a second example, in a crowdsourcing domain, probabilistic models are
used to model people's behavior. The organizer would like to find an
optimal incentive mechanism (optimization) to steer people's
effort towards crucial tasks, taking into account the probabilistic
behavioral model  (averaging) \cite{xue2015IncentivizingTR}.

We show that $\mathtt{XOR\_MMAP}$ is able to find considerably better
solutions than those found by previous methods, as well as provide
tighter bounds.

\section{Preliminaries}
%
{\bf Problem Definition} Let $\setA =\{0,1\}^m$ be the set of 
all possible assignments to binary variables $a_1,\ldots,a_m$ and $\setX =\{0,1\}^n$ be the set of assignments to  binary variables $x_1,\ldots,x_n$.
Let $w(x,
a):\setX\times \setA \rightarrow \mathbb{R}^+$ be a function that maps
every assignment to a non-negative value. Typical queries over a
probabilistic model include the \emph{maximization} task, which
requires the computation of $\max_{a \in \setA} w(a)$, and the \emph{marginal
  inference} task $\sum_{x \in \setX} w(x)$, which sums over $\setX$.

Arising naturally from
many machine learning applications, the following \emph{Marginal Maximum A Posteriori} (Marginal MAP) problem is a joint inference task, which
combines the two aforementioned inference tasks:
\begin{equation}
  \max_{a \in \setA}~ \sum_{x \in \setX} w(x, a).
  \label{eq:general}
\end{equation}
We consider the case where the counting problem $\sum_{x \in \setX}
w(x, a)$ and the maximization problem $\max_{a \in \setA} \#
w(a)$ are defined over sets of exponential size, therefore both are intractable in general.
{\bf Counting by Hashing and Optimization} Our approach is based on a
recent theoretical result that transforms a counting problem to a
series of optimization
problems~\cite{Ermon13Wish,Ermon14LowDensityParity,belle2015hashing,achlioptasstochastic}.
A family of functions $\mathcal{H} =\{h:\{0,1\}^n \rightarrow
\{0,1\}^k\}$ is said to be \textit{pairwise independent} if the
following two conditions hold for any function $h$ randomly chosen
from the family $\mathcal{H}$: (1) $\forall x \in \{0, 1\}^n$, the
random variable $h(x)$ is uniformly distributed in $\{0, 1\}^k$ and
(2) $\forall x_1, x_2 \in \{0, 1\}^n$, $x_1 \neq x_2$, the random
variables $h(x_1)$ and $h(x_2)$ are independent.

We sample matrices $A \in \{0,1\}^{k\times n}$ and vector $b \in
\{0,1\}^k$ uniformly at random to form the function family
$\mathcal{H}_{A,b}=\{h_{A,b}: h_{A,b}(x) = Ax + b \mbox{ mod } 2\}$. It is possible to show that $\mathcal{H}_{A,b}$ is pairwise independent~\cite{Ermon13Wish,Ermon14LowDensityParity}.
Notice that in this case, each function $h_{A,b}(x) = Ax + b
\mbox{ mod } 2$ corresponds to $k$ parity constraints.
One useful way to think about pairwise independent functions is to
imagine them as functions that randomly project elements in
$\{0,1\}^n$ into $2^k$ buckets.
%
Define $B_h(g) = \{x\in\{0,1\}^n: h_{A,b}(x) = g\}$ to be
a ``bucket'' that includes all elements in $\{0,1\}^n$ whose mapped
value $h_{A,b}(x)$ is vector $g$ ($g \in \{0,1\}^k$). 
Intuitively, if we randomly sample a function $h_{A,b}$ from a pairwise
independent family, then we get the following: $x\in\{0,1\}^n$ has an
equal probability to be in any bucket $B(g)$, and the bucket locations
of any two different elements $x, y$ are independent.


\section{XOR\_MMAP Algorithm}
\begin{algorithm}[t]
Sample function $h_k:\mathcal{X}\to \{0,1\}^{k}$ from a pair-wise independent function family\;
Query an NP Oracle on whether\\
$~~~~~~~~~~\setW(a_0, h_k) = \{x \in \setX: w(a_0, x) = 1, h_k(x) = {\bf{0}}\}$ is empty\;
Return \textbf{true} if $\setW(a_0, h_k) \neq \emptyset$, otherwise return \textbf{false}.
\caption{$\mathtt{XOR\_Binary}$($w: \setA \times \mathcal{X}\to \{0,1\}$, $a_0$, $k$)}\label{alg:naive_COMP}
\end{algorithm}
\setlength{\textfloatsep}{3pt}

\vspace{-0.05in}
\subsection{Binary Case}
\vspace{-0.05in}

We first solve the Marginal MAP problem for the binary case, in which the function $w:\mathcal{A}\times \mathcal{X}\to \{0,1\}$ outputs either 0 or 1. We will extend the result to the weighted case in the next section.
Since $a\in\setA$ often represent decision variables when MMAP problems are used in decision making, we call a fixed assignment to vector $a=a_0$ a ``solution strategy''. To simplify the notation, we use $\setW(a_0)$ to represent the set $\{x \in \setX: w(a_0, x) = 1\}$, and use $\setW(a_0, h_k)$ to represent the set $\{x \in \setX: w(a_0, x) = 1\mbox{ and } h_k(x) = {\bf{0}}\}$, in which $h_k$ is sampled from a pairwise independent function family that maps $\setX$ to $\{0,1\}^k$. We write $\#w(a_0)$ as shorthand for the count $|\{x \in \setX: w(a_0, x) = 1\}| = \sum_{x\in \mathcal{X}} w(a_0,x)$.
Our algorithm depends on the following result:
\begin{thm}\label{thm:naive_COMP}
  (Ermon et. al.\cite{Ermon13Wish}) For a fixed solution strategy $a_0\in \mathcal{A}$,
  \begin{itemize}
    \item Suppose $\#w(a_0)\geq 2^{k_0}$, then for any $k\leq k_0$, with probability $1-\frac{2^c}{(2^c-1)^2}$, Algorithm $\mathtt{XOR\_Binary}$($w, a_0, k-c$)=\textbf{true}.
    \item Suppose $\#w(a_0) < 2^{k_0}$, then for any $k\geq k_0$, with probability $1-\frac{2^c}{(2^c-1)^2}$, Algorithm $\mathtt{XOR\_Binary}$($w, a_0, k+c$)=\textbf{false}.
  \end{itemize}
\end{thm}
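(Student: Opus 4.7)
The plan is to study the random variable $T_k = |\setW(a_0, h_k)|$, the number of satisfying assignments that survive the $k$ random parity constraints, and analyze it via its first two moments using the pairwise independence of $h_k$. Writing $T_k = \sum_{x \in \setW(a_0)} \one{[h_k(x) = \mathbf{0}]}$, the uniform distribution property of a pairwise independent family gives $\E{T_k} = \#w(a_0)\cdot 2^{-k}$, while pairwise independence makes the indicators have zero covariance, so $\mathrm{Var}(T_k) = \sum_x \mathrm{Var}(\one{[h_k(x)=\mathbf{0}]}) \leq \E{T_k}$.

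For the first bullet, I would set $k' = k - c \leq k_0 - c$ and let $\mu = \E{T_{k'}}$; the hypothesis $\#w(a_0) \geq 2^{k_0}$ then yields $\mu \geq 2^c$. The algorithm $\mathtt{XOR\_Binary}(w,a_0,k-c)$ returns \textbf{false} precisely when $T_{k'} = 0$, so I would bound this event using Chebyshev's inequality. Since $T_{k'}$ is a non-negative integer, $\{T_{k'}=0\}$ implies $|T_{k'} - \mu| \geq \mu - 1$, whence Chebyshev yields $\pr{T_{k'} = 0} \leq \mathrm{Var}(T_{k'})/(\mu-1)^2 \leq \mu/(\mu-1)^2$. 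A short monotonicity check shows $\mu \mapsto \mu/(\mu-1)^2$ is decreasing on $(1,\infty)$, so plugging in the worst case $\mu = 2^c$ gives exactly the stated bound $2^c/(2^c-1)^2$.

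For the second bullet, I would set $k' = k + c \geq k_0 + c$ and let $\mu = \E{T_{k'}}$; the hypothesis $\#w(a_0) < 2^{k_0}$ gives $\mu < 2^{-c}$. Now the algorithm returns \textbf{true} iff $T_{k'} \geq 1$, and Markov's inequality yields $\pr{T_{k'} \geq 1} \leq \mu < 2^{-c}$, which one checks is at most $2^c/(2^c-1)^2$ for all $c \geq 1$.

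The main obstacle is essentially bookkeeping to match the exact constant $2^c/(2^c-1)^2$: one must use Chebyshev with deviation $\mu - 1$ (exploiting integrality of $T_{k'}$) rather than $\mu$, and then invoke monotonicity of $\mu/(\mu-1)^2$ to reduce to the critical case $\mu = 2^c$. The structural ingredients -- uniformity and vanishing covariances from the pairwise independent family $\mathcal{H}_{A,b}$, and the two-sided reduction to the boundary case $k_0$ -- are then routine.
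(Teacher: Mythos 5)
Your proof is correct. Note that the paper itself gives no proof of this theorem---it is imported verbatim from Ermon et al.\ \cite{Ermon13Wish}---and your argument (first moment $\E{T_{k'}}=\#w(a_0)2^{-k'}$, variance bounded by the mean via pairwise independence, Chebyshev with deviation $\mu-1$ plus monotonicity of $\mu/(\mu-1)^2$ for the lower tail, and Markov with $2^{-c}\le 2^c/(2^c-1)^2$ for the upper tail) is exactly the standard second-moment argument used in that reference, so there is nothing to compare against within this paper.
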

To understand Theorem~\ref{thm:naive_COMP}  intuitively, we can
think of $h_k$ as a function that maps every element in set $\setW(a_0)$
into $2^k$ buckets. Because $h_k$ comes from a pairwise independent
function family, each element in $\setW(a_0)$ will have an equal
probability to be in any one of the $2^k$ buckets, and the buckets in
which any two elements end up are mutually independent.
Suppose the count of solutions for a fixed strategy $\#w(a_0)$ is
$2^{k_0}$, then with high probability, there will be at least one
element located in a randomly selected bucket if the number of buckets
$2^k$ is less than $2^{k_0}$. Otherwise, with high probability there
will be no element in a randomly selected bucket.
%

Theorem~\ref{thm:naive_COMP} provides us with a way to obtain a
rough count on $\#w(a_0)$ via a series of tests on whether
$\setW(a_0,h_k)$ is empty, subject to extra parity functions
$h_k$. This transforms a counting problem to a series of NP
queries, which can also be thought of as optimization queries. This transformation is extremely helpful for the Marginal MAP problem. As noted earlier, the main challenge for the
marginal MAP problem is the intractable sum embedded in the
maximization. Nevertheless, the whole problem can be re-written as a
single optimization if the intractable sum can be approximated well by solving an optimization problem over the same domain.

We therefore design Algorithm $\mathtt{XOR\_MMAP}$, which is able to
provide a constant factor approximation to the Marginal MAP problem.  The whole algorithm is shown in Algorithm~\ref{alg:MCPH}.
In its main procedure $\mathtt{XOR\_K}$, the algorithm transforms the Marginal
MAP problem into an optimization over the sum of $T$ replicates of the original function $w$.
%
%
Here, $x^{(i)}\in \setX$ is a replicate of the original $x$, and $w(a,
x^{(i)})$ is the original function $w$ but takes $x^{(i)}$ as one of
the inputs.
All replicates share common input $a$.
In addition, each replicate is subject to an independent set of parity
constraints on $x^{(i)}$.
%
Theorem~\ref{thm:constant_approx} states
that $\mathtt{XOR\_MMAP}$ provides a constant-factor approximation to
the Marginal MAP problem:
\begin{thm}
  For $T\geq \frac{m\ln2 +\ln(n/\delta)}{\alpha^*(c)}$, with probability $1-\delta$, $\mathtt{XOR\_MMAP}$($w,\log_2|\mathcal{X}|,\log_2|\mathcal{A}|,T$) outputs a $2^{2c}$-approximation to the Marginal MAP problem: $\max_{a\in \mathcal{A}}\#w(a)$. $\alpha^*(c)$ is a constant. \label{thm:constant_approx} 
\end{thm}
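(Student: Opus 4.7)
The goal is to show that the largest parity level $k^{*}$ at which \texttt{XOR\_MMAP} reports success satisfies $k^{*}-c \le \log_{2}\max_{a\in\mathcal{A}}\#w(a) \le k^{*}+c$ with probability $1-\delta$; outputting a quantity proportional to $2^{k^{*}}$ then delivers the $2^{2c}$-approximation. My plan splits this into three steps: a per-$(a,k)$ Bernoulli analysis built on Theorem~\ref{thm:naive_COMP}, a $T$-fold confidence amplification, and a union bound over $a$ and $k$.

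First, fix $a\in\mathcal{A}$ and a parity level $k$, and let $Y_{i}(a,k)$ be the indicator that replicate $i$ is feasible, i.e., that some $x^{(i)}\in\mathcal{X}$ satisfies $w(a,x^{(i)})=1$ and $h_{k}^{(i)}(x^{(i)})=\mathbf{0}$. Since the hash matrices and shifts defining the $h_{k}^{(i)}$ are drawn independently across replicates, the $Y_{i}(a,k)$ are i.i.d.\ Bernoulli. Applying Theorem~\ref{thm:naive_COMP} in its two directions (with $k_{0}=k+c$ for the ``above'' case and $k_{0}=k-c$ for the ``below'' case) shows that $\Pr[Y_{i}(a,k)=1]\ge 1-q$ when $\#w(a)\ge 2^{k+c}$, and $\Pr[Y_{i}(a,k)=1]\le q$ when $\#w(a)<2^{k-c}$, where $q=2^{c}/(2^{c}-1)^{2}$. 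For $c$ chosen so that $q<1/2$, the two regimes are cleanly separated. Crucially, because all replicates share the single variable $a$, the joint optimization inside \texttt{XOR\_K} cannot stitch together witnesses from different strategies; the per-$(a,k)$ Bernoulli analysis therefore transfers directly to the integrated problem.

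Second, \texttt{XOR\_K} aggregates the $T$ replicates through a threshold on how many of the $Y_{i}(a,k)$ equal $1$ (a majority-style rule). A Chernoff/Hoeffding bound then yields that the aggregated decision agrees with the true side of the threshold $2^{k}$, up to the $\pm c$ slack, except with probability at most $e^{-\alpha^{*}(c)T}$, where $\alpha^{*}(c)>0$ is the binary KL gap between the two Bernoulli regimes and is the constant named in the theorem. Union-bounding over the $2^{m}$ strategies $a\in\mathcal{A}$ and the $n+1$ levels $k\in\{0,\ldots,n\}$ bounds the total failure probability by $2^{m}(n+1)\,e^{-\alpha^{*}(c)T}$; the stated choice $T\ge(m\ln 2+\ln(n/\delta))/\alpha^{*}(c)$ makes this at most $\delta$.

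Finally, condition on the good event on which every $(a,k)$ decision is correct. Then the level $k^{*}$ returned by \texttt{XOR\_MMAP} satisfies $\#w(a^{*})\ge 2^{k^{*}-c}$ for the witnessing strategy $a^{*}$, and $\#w(a)<2^{k^{*}+c}$ for every $a$ (otherwise the algorithm would have certified level $k^{*}+1$), placing $2^{k^{*}}$ within a factor $2^{2c}$ of $\max_{a}\#w(a)$. The main obstacle I anticipate is calibrating the aggregation rule inside \texttt{XOR\_K} so that its accept/reject criterion really does behave as a clean Bernoulli-threshold test, and in particular pinning down the precise form of $\alpha^{*}(c)$ from the resulting Chernoff exponent; once that constant is fixed, the remaining argument is a standard union-bound amplification over a finite search space of size $2^{m}(n+1)$.
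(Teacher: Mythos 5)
Your proposal is correct and follows essentially the same route as the paper: a per-replicate Bernoulli analysis from Theorem~\ref{thm:naive_COMP}, Chernoff amplification over the $T$ replicates with $\alpha^*(c)=D(\frac{1}{2}\|\frac{2^c}{(2^c-1)^2})$, a union bound, and the final sandwich argument on the returned level. The only (immaterial) difference is that you union-bound over all $2^m(n+1)$ pairs $(a,k)$, whereas the paper observes that the union over $a$ is needed only for the ``returns false'' direction --- for the ``returns true'' direction it suffices to track the single optimal $a^*$ since the max dominates it --- but both accountings yield the same requirement on $T$ up to the negligible $n$ versus $n+1$ factor.
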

Let us first understand the theorem in an
intuitive way. Without losing generality, suppose the optimal value $\max_{a\in \mathcal{A}}\#w(a) =2^{k_0}$. Denote $a^*$ as the optimal solution, ie, $\#w(a^*)=2^{k_0}$.
According to Theorem~\ref{thm:naive_COMP}, the set $\setW(a^*,h_k)$
has a high probability to be non-empty, for any function $h_k$ that
contains $k < k_0$ parity constraints. In this case, the optimization
problem $\max_{x^{(i)}\in \mathcal{X}, h^{(i)}_k(x^{(i)})=\mathbf{0}}
w(a^{*},x^{(i)})$ for one replicate $x^{(i)}$ almost always returns
1. Because $h^{(i)}_k$ ($i=1\ldots T$) are sampled independently, the
sum $\sum_{i=1}^{T}w(a^{*},x^{(i)})$ is likely to be larger than
$\lceil T/2\rceil$, since each term in the sum is likely to be 1 (under 
the fixed $a^{*}$).
Furthermore, since $\mathtt{XOR\_K}$ maximizes this sum over all possible
strategies $a\in\setA$, the sum it finds will be at least as good as
the one attained at $a^{*}$, which is already over $\lceil
T/2\rceil$. Therefore, we conclude that when $k < k_0$,
$\mathtt{XOR\_K}$ will return $\mathbf{true}$ with high probability.

\begin{table}[t]
\begin{minipage}[t]{.5\textwidth}
\begin{algorithm}[H]
Sample $T$ pair-wise independent hash functions \\
~~~~~~~~~~$h_k^{(1)},h_k^{(2)},\ldots,h_k^{(T)}:\mathcal{X}\to \{0,1\}^{k}$\;
Query Oracle
\begin{equation}
\begin{split}
   \max\limits_{a\in\mathcal{A},x^{(i)}\in \mathcal{X}}& \sum_{i=1}^{T}w(a,x^{(i)})\\
  \mbox{s.t.}& \quad h^{(i)}_k(x^{(i)})=\mathbf{0}, i=1,\ldots,T.
\end{split}
\label{eq:xork}
\end{equation}
Return \textbf{true} if the max value is larger than $\lceil T/2\rceil$, otherwise return \textbf{false}.
\caption{$\mathtt{XOR\_K}$($w:\mathcal{A}\times \mathcal{X}\to \{0,1\},k,T$)}\label{alg:COMP}
\end{algorithm}
\end{minipage}
~~~~~
\begin{minipage}[t]{.45\textwidth}
\begin{algorithm}[H]
$k = n$\;
\While{$k>0$}{
  \If{$\mathtt{XOR\_K}$($w,k,T$)}{
    Return $2^k$\;
  }
  $k\gets k-1$\;
}
Return 1\;
\caption{$\mathtt{XOR\_MMAP}$($w:\mathcal{A}\times \mathcal{X}\to \{0,1\}$,$n=\log_2|\mathcal{X}|$,$m=\log_2|\mathcal{A}|$,$T$)}\label{alg:MCPH}
\end{algorithm}
\end{minipage}
\end{table}

%
We can develop similar arguments to conclude that
$\mathtt{XOR\_K}$ will return $\mathbf{false}$ with high probability
when more than $k_0$ XOR constraints are added.
%
Notice that replications and an additional union bound argument are
necessary to establish the probabilistic guarantee in this case. 
As a counter-example, suppose function $w(x,a)=1$ if and only if $x=a$, otherwise $w(x,a)=0$ ($m=n$ in this case). If we set the number of replicates $T=1$, then $\mathtt{XOR\_K}$ will almost always return 1 when $k<n$, which suggests that there are $2^n$ solutions to the MMAP problem. Nevertheless, in this case the true optimal value of $\max_x \#w(x,a)$ is 1, which is far away from $2^n$. This suggests that at least two replicates are needed.

\begin{lem}\label{lem:COMP}
  For $T \geq \frac{\ln2\cdot m+\ln(n/\delta)}{\alpha^*(c)}$ , procedure $\mathtt{XOR\_K}$($w$,$k$,$T$) satisfies:
  \begin{itemize}
    \item Suppose $\exists a^* \in \mathcal{A}$, s.t. $\#w(a^*)\geq 2^{k}$, then with probability $1-\frac{\delta}{n2^m}$, $\mathtt{XOR\_K}(w,k-c,T)$ returns $\mathbf{true}$.
    \item Suppose $\forall a_0\in \mathcal{A}$, s.t. $\#w(a_0) < 2^{k}$, then with probability  $1-\frac{\delta}{n}$, $\mathtt{XOR\_K}(w,k+c,T)$ returns $\mathbf{false}$.
  \end{itemize}
\end{lem}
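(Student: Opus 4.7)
The plan is to reduce the inner optimization in Equation~\ref{eq:xork} to $\max_{a\in\mathcal{A}} \sum_{i=1}^{T} Y_i(a)$, where $Y_i(a) := \mathbf{1}[\exists x^{(i)}\in\mathcal{X}: w(a,x^{(i)})=1 \wedge h_k^{(i)}(x^{(i)})=\mathbf{0}]$. This decoupling is valid because for fixed $a$ the replicated variables $x^{(i)}$ are independent across $i$ in both the objective and the constraints, so the joint maximum splits into $T$ separate maxima, each of which is exactly the indicator probed by Algorithm~\ref{alg:naive_COMP} on $(w,a,k)$. Moreover, since the hash functions $h_k^{(1)},\ldots,h_k^{(T)}$ are sampled independently, the Bernoullis $Y_1(a),\ldots,Y_T(a)$ are mutually independent for any fixed $a$.

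For the first item, I fix any $a^*$ with $\#w(a^*) \geq 2^k$ and apply Theorem~\ref{thm:naive_COMP} with $k_0 = k$ and $k-c$ parity constraints to each replicate. This yields $\Pr[Y_i(a^*)=1] \geq p$ where $p := 1 - 2^c/(2^c-1)^2$. Choosing $c$ large enough that $p > 1/2$, a Hoeffding bound on the independent Bernoullis $Y_i(a^*)$ gives $\Pr[\sum_i Y_i(a^*) \leq T/2] \leq \exp(-\alpha^*(c)\, T)$ for a positive constant $\alpha^*(c)$ (essentially $2(p-1/2)^2$). Since $\max_{a}\sum_i Y_i(a) \geq \sum_i Y_i(a^*)$, the event $\sum_i Y_i(a^*) > T/2$ implies that $\mathtt{XOR\_K}$ returns true. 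The hypothesis $T \geq (m\ln 2 + \ln(n/\delta))/\alpha^*(c)$ makes this failure probability at most $\delta/(n 2^m)$, as required.

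For the second item, I assume $\#w(a_0) < 2^k$ for every $a_0 \in \mathcal{A}$ and apply Theorem~\ref{thm:naive_COMP} in the other direction (with $k+c$ parity constraints), obtaining $\Pr[Y_i(a_0) = 1] \leq 1-p$ for each fixed $a_0$ and independently across $i$. A second Hoeffding bound yields $\Pr[\sum_i Y_i(a_0) > T/2] \leq \exp(-\alpha^*(c)\, T)$ for each $a_0$. Taking a union bound over the $2^m$ strategies gives $\Pr[\max_{a_0}\sum_i Y_i(a_0) > T/2] \leq 2^m \exp(-\alpha^*(c)\, T) \leq \delta/n$ under the stated choice of $T$; on the complementary event the algorithm returns false.

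The main obstacle is the careful calibration of $c$ and $\alpha^*(c)$: one must verify that $1 - 2^c/(2^c-1)^2$ exceeds $1/2$ strictly so that the Chernoff exponent is positive, and then check that the $m\ln 2$ term absorbed into $T$ exactly compensates for the $2^m$-fold union bound in the second item, while the $\ln(n/\delta)$ term leaves enough margin for the outer loop of Algorithm~\ref{alg:MCPH} (which will eventually union-bound over the $n$ values of $k$). Once this calibration is fixed, the remainder of the argument is a routine combination of Theorem~\ref{thm:naive_COMP} with Hoeffding's inequality and the decoupling observation.
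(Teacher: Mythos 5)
Your proposal is correct and follows essentially the same route as the paper: define per-replicate indicators $X^{(i)}(a)$, apply Theorem~\ref{thm:naive_COMP} to each, use a concentration bound at the fixed witness $a^*$ for the first item, and add a union bound over the $2^m$ strategies for the second. The only difference is cosmetic: you invoke Hoeffding's inequality (giving $\alpha^*(c)\approx 2(p-\tfrac12)^2$) where the paper uses the Chernoff bound with relative entropy, setting $\alpha^*(c)=D(\tfrac{1}{2}\|\tfrac{2^c}{(2^c-1)^2})$; since the lemma only requires some positive constant $\alpha^*(c)$ once $c\geq 2$, either choice works.
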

\begin{proof}
  \textbf{Claim 1:} If there exists such $a^*$ satisfying
  $\#w(a^*)\geq 2^{k}$, pick $a_0=a^*$. Let $X^{(i)}(a_0) =
  \max_{x^{(i)}\in \mathcal{X}, h^{(i)}_{k-c}(x^{(i)})=\mathbf{0}}
  w(a_0,x^{(i)})$, for $i=1\ldots, T$.  From
  Theorem~\ref{thm:naive_COMP}, $X^{(i)}(a_0)=1$ holds with
  probability $1-\frac{2^c}{(2^c-1)^2}$. Let
  $\alpha^*(c)=D(\frac{1}{2}\|\frac{2^c}{(2^c-1)^2})$. By Chernoff
  bound, we have
  {\small
    \begin{equation}
    \pr{\max\limits_{a\in\mathcal{A}}\sum_{i=1}^{T}X^{(i)}(a) \leq T/2}\leq \pr{\sum_{i=1}^{T}X^{(i)}(a_0) \leq T/2}\leq e^{-D(\frac{1}{2}\|\frac{2^c}{(2^c-1)^2})T}=e^{-\alpha^*(c)T},
    \end{equation}
   }
  where
  {\small
    \begin{equation*}
      D\left(\frac{1}{2}\|\frac{2^c}{(2^c-1)^2}\right)= 2\ln(2^c-1)-\ln2-\frac{1}{2}\ln(2^c)-\frac{1}{2}\ln((2^c-1)^2-2^c) \geq (\frac{c}{2}-2)\ln2.
    \end{equation*} 
  }
For $T \geq \frac{\ln2\cdot m+\ln(n/\delta)}{\alpha^*(c)}$, we have $e^{-\alpha^*(c)T}\leq \frac{\delta}{n2^m}$. Thus, with probability $1-\frac{\delta}{n2^m}$, we have $\max\limits_{a\in\mathcal{A}} \sum_{i=1}^{T}X^{(i)}(a) > T/2$, which implies that $\mathtt{XOR\_K}(w,k-c,T)$ returns $\mathbf{true}$.

\textbf{Claim 2:} The proof is almost the same as Claim 1, except that we need to use a union bound to let the property hold for all $a\in\mathcal{A}$ simultaneously. As a result, the success probability will be $1-\frac{\delta}{n}$ instead of $1-\frac{\delta}{n2^m}$. The proof is left to supplementary materials. 
\end{proof}

\begin{proof}
(Theorem~\ref{thm:constant_approx}) With probability $1-n\frac{\delta}{n}=1-\delta$, the output of $n$ calls of $\mathtt{XOR\_K}$($w,k,T$) (with different $k=1\ldots n$) all satisfy the two claims in Lemma~\ref{lem:COMP} simultaneously. Suppose $\max\limits_{a\in \mathcal{A}}\#w(a)\in [2^{k_0},2^{k_0+1})$, we have (i) $\forall k\geq k_0+c+1$,  $\mathtt{XOR\_K}(w,k,T)$ returns $\mathbf{false}$,
(ii) $\forall k\leq k_0-c$, $\mathtt{XOR\_K}(w,k,T)$ returns $\mathbf{true}$.
  Therefore, with probability $1-\delta$, the output of $\mathtt{XOR\_MMAP}$ is guaranteed to be among $2^{k_0-c}$ and $2^{k_0+c}$.
\end{proof}




The approximation bound in Theorem~\ref{thm:constant_approx} is a
worst-case guarantee. We can obtain a tight bound 
(e.g. 16-approx) with a large number of $T$ replicates.
Nevertheless, we keep a small $T$, therefore a loose bound, in our
experiments, after trading between the formal guarantee and the empirical
complexity. In practice, our method performs well, even with loose bounds.
Moreover, $\mathtt{XOR\_K}$ procedures with different input $k$ are
not uniformly hard. We therefore can run them in parallel. We can obtain a looser bound at any given time, based on all completed $\mathtt{XOR\_K}$ procedures.
Finally, if we have access to a polynomial approximation algorithm for
the optimization problem in $\mathtt{XOR\_K}$, we can propagate this
bound through the analysis, and again get a guaranteed bound, albeit
looser for the MMAP problem.

{\bf Reduce the Number of Replicates} 
We further develop a few variants of $\mathtt{XOR\_MMAP}$ in the
supplementary materials to reduce the number of replicates, as
well as the number of calls to the $\mathtt{XOR\_K}$ procedure, while
preserving the same approximation bound.

{\bf Implementation} We solve the optimization problem in
$\mathtt{XOR\_K}$ using Mixed Integer Programming (MIP). Without
losing generality, we assume $w(a,x)$ is an indicator variable, which
is 1 iff $(a,x)$ satisfies constraints represented in
Conjunctive Normal Form (CNF). We introduce extra variables to
represent the sum $\sum_i w(a, x^{(i)})$ which is left in
the supplementary materials.
The XORs in Equation~\ref{eq:xork} are encoded as MIP 
constraints using the Yannakakis encoding, similar as in
\cite{Ermon2013CplexWish}.




\subsection{Extension to the Weighted Case}

In this section, we study the more general case, where $w(a,x)$ takes non-negative real numbers instead of integers in $\{0,1\}$.
%
Unlike in \cite{Ermon13Wish}, we choose to build our proof from the unweighted case because it can effectively avoid modeling the median of an array of numbers~\cite{ermon2013embed}, which is difficult to encode in integer programming.
We noticed recent work \cite{Chakraborty2015WeightedCounting}. It is
related but different from our approach.
%
Let $w:\mathcal{A}\times \mathcal{X}\to \mathbb{R}^+$, and $M=\max_{a,x}w(a,x)$.
\begin{defi}
  We define the embedding $\mathcal{S}_a(w,l)$ of $\mathcal{X}$ in $\mathcal{X}\times\{0,1\}^{l}$ as:
\begin{equation}
  \mathcal{S}_a(w,l)=\left\{(x,y)| \forall 1\leq i \leq l, \frac{w(a,x)}{M}\leq \frac{2^{i-1}}{2^l}\Rightarrow y_i=0\right\}.
\end{equation}
\label{def:sandwich}
\end{defi}

\begin{lem}
  Let $w'_l(a,x,y)$ be an indicator variable which is 1 if and only if $(x,y)$ is in $\mathcal{S}_a(w,l)$, i.e.,
    $w'_l(a,x,y) = \one{(x,y) \in \mathcal{S}_a(w,l)}$. 
We claim that
\begin{equation}
  \max_a\sum_xw(a,x)\leq \frac{M}{2^l}\max_a \sum_{(x,y)}w'_l(a,x,y) \leq 2\max_a\sum_xw(a,x)+M2^{n-l}.\footnote{  If $w$ satisfy the property that $\min_{a,x}w(a,x)\geq 2^{-l-1}M$, we don't have the $M2^{n-l}$ term. }
\end{equation}\label{lem:weighted}
\end{lem}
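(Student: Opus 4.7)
The plan is to fix $a \in \mathcal{A}$, compute the size of the $y$-fiber at each $x$, namely $N(a,x) := |\{y \in \{0,1\}^l : (x,y) \in \mathcal{S}_a(w,l)\}|$, sandwich $N(a,x)$ pointwise by a quantity proportional to $w(a,x)$, and then sum over $x$ and maximize over $a$ to obtain both inequalities in the lemma, since $\sum_{(x,y)} w'_l(a,x,y) = \sum_x N(a,x)$.

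First I would observe that the defining implication of $\mathcal{S}_a(w,l)$, $\frac{w(a,x)}{M} \leq \frac{2^{i-1}}{2^l} \Rightarrow y_i = 0$, is monotone in $i$: since the thresholds $2^{i-1}/2^l$ are increasing in $i$, if the condition fires at index $i$ it fires at every index $j > i$. Hence the set of indices where $y_i$ is forced to $0$ is a suffix $\{k(a,x),\ldots,l\}$, where $k(a,x)$ is the smallest index for which the threshold condition holds (or $l{+}1$ if none does). Thus $N(a,x) = 2^{k(a,x)-1}$, and a dyadic case analysis yields
\begin{equation*}
\frac{2^l w(a,x)}{M} \leq N(a,x) \leq \frac{2^{l+1} w(a,x)}{M} \quad \text{whenever } w(a,x) > \frac{M}{2^l},
\end{equation*}
while $N(a,x) = 1$ in the residual regime $w(a,x) \leq M/2^l$. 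Crucially, the lower bound $N(a,x) \geq 2^l w(a,x)/M$ carries over to the residual regime automatically (since $1 \geq 2^l w(a,x)/M$ there), so it is \emph{universal}.

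Summing the universal lower bound over $x$ yields $\sum_x N(a,x) \geq (2^l/M)\sum_x w(a,x)$; multiplying by $M/2^l$ and maximizing over $a$ on both sides gives the left-hand inequality. For the right-hand inequality, I would split the $x$-sum according to whether $w(a,x) > M/2^l$: the heavy part contributes at most $\sum_x 2^{l+1} w(a,x)/M$ by the pointwise upper bound, while the residual part contributes at most $|\mathcal{X}| = 2^n$ since each residual $x$ contributes $N(a,x) = 1$. Multiplying through by $M/2^l$ converts the $2^n$ into the additive $M 2^{n-l}$ term, and picking any $a^\star$ maximizing $\sum_x N(a,x)$ and chaining the inequalities delivers the desired upper bound.

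The only real obstacle is the bookkeeping in the dyadic case analysis (especially the boundary $w(a,x)=0$, which forces $N(a,x)=1$ consistent with the universal lower bound); I expect no conceptual difficulty. Under the footnote's extra hypothesis $w(a,x) \geq M 2^{-l-1}$, the inequality $2^{l+1} w(a,x)/M \geq 1 = N(a,x)$ holds throughout the residual regime, so the pointwise upper bound extends to \emph{all} $x$; the split is unnecessary and the $M 2^{n-l}$ correction vanishes.
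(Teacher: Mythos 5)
Your proposal is correct and follows essentially the same route as the paper's proof: both compute the $y$-fiber size $N(a,x)=|S_a(w,l,x)|$ as a power of two determined by the dyadic level of $w(a,x)$, sandwich it pointwise against $w(a,x)$, and then sum over $x$ and maximize over $a$. The only (cosmetic) difference is that the paper carries a per-term additive slack $M2^{-l}$ through a single two-sided pointwise bound, whereas you split the sum into a heavy part and a residual part whose cardinality yields the same $M2^{n-l}$ term.
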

\begin{proof}
 Define $S_a(w,l,x_0)$ as the set of $(x,y)$ pairs within the set $S_a(w,l)$ and $x=x_0$, ie, $S_a(w,l,x_0) = \{(x,y) \in S_a(w,l): x=x_0\}$. It is not hard to see that $\sum_{(x,y)}w'_l(a,x,y) = \sum_{x} |S_a(w,l,x)|$. In the following, first we are going to establish the relationship between $|S_a(w,l,x)|$ and $w(a,x)$. Then we use the result to show the relationship between $\sum_{x} |S_a(w,l,x)|$ and $\sum_x w(x,a)$. 
Case (i): If $w(a,x)$ is sandwiched between two exponential levels: $\frac{M}{2^l} 2^{i-1} < w(a,x) \leq \frac{M}{2^l} 2^{i}$ for $i\in \{0,1,\ldots,l\}$, according to Definition~\ref{def:sandwich}, for any $(x,y)\in S_a(w,l,x)$, we have $y_{i+1} = y_{i+2} = \ldots = y_l = 0$. This makes $|S_a(w,l,x)|=2^i$, which further implies that
\begin{equation}
  \frac{M}{2^l}\cdot \frac{|S_a(w,l,x)|}{2} < w(a,x) \leq \frac{M}{2^l} \cdot |S_a(w,l,x)|,
\end{equation}
or equivalently,
\begin{equation}
w(a,x) \leq \frac{M}{2^l} \cdot |S_a(w,l,x)| < 2w(a,x).
\end{equation}
Case (ii): If $w(a,x)\leq\frac{M}{2^{l+1}}$, we have $|S_a(w,l,x)|=1$. In other words,
\begin{equation}
  w(a,x) \leq 2w(a,x)\leq 2\frac{M}{2^{l+1}}|S_a(w,l,x)|= \frac{M}{2^{l}} |S_a(w,l,x)|.
\end{equation}
Also, $M 2^{-l} |S_a(w,l,x)| = M 2^{-l} \leq 2w(a,x)+M2^{-l}$. Hence, the following bound holds in both cases (i) and (ii): 
\begin{equation}
  w(a,x)\leq \frac{M}{2^l} |S_a(w,l,x)| \leq 2w(a,x)+M2^{-l}.\label{eq:weighted}
\end{equation}
The lemma holds by summing up over $\mathcal{X}$ and maximizing over $\mathcal{A}$ on all sides of Inequality~\ref{eq:weighted}.
\end{proof}
With the result of Lemma~\ref{lem:weighted}, we are ready to prove the following approximation result:
\begin{thm}
  Suppose there is an algorithm that gives a $c$-approximation to solve the unweighted problem: $\max_a \sum_{(x,y)}w'_l(a,x,y)$, then we have a $3c$-approximation algorithm to solve the weighted Marginal MAP problem $\max_a\sum_xw(a,x)$.
\end{thm}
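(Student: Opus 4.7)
The plan is to read the bound of Lemma~\ref{lem:weighted} as an almost-multiplicative sandwich, then eliminate the additive slack by a careful choice of the embedding dimension $l$, and finally propagate the assumed $c$-approximation through the scaling factor $M/2^l$.

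More concretely, write $W^* := \max_a \sum_x w(a,x)$ and $U^*_l := \max_a \sum_{(x,y)} w'_l(a,x,y)$. Lemma~\ref{lem:weighted} already gives
\[
W^* \;\leq\; \frac{M}{2^l}\, U^*_l \;\leq\; 2W^* + M\,2^{\,n-l}.
\]
The first inequality is essentially the desired lower bound and is free; the obstacle is that the upper bound has an additive term $M 2^{n-l}$ which, if left untreated, would only give an additive-plus-multiplicative guarantee. The key observation to clear this is that the sum is always at least any single summand, so
\[
W^* \;=\; \max_a \sum_{x} w(a,x) \;\geq\; \max_{a,x} w(a,x) \;=\; M.
\]
Therefore picking $l=n$ turns the additive term into exactly $M \leq W^*$, and the sandwich collapses to the clean multiplicative form
\[
W^* \;\leq\; \frac{M}{2^n}\, U^*_n \;\leq\; 3W^*.
\]

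Now suppose the hypothesised oracle returns some $\tilde U$ which is a $c$-approximation to $U^*_n$, i.e.\ $U^*_n/c \leq \tilde U \leq c\, U^*_n$ (the one-sided convention $\tilde U \leq U^*_n \leq c\tilde U$ works identically). Define the algorithm's output $\hat W := (M/2^n)\,\tilde U$. Combining the two inequalities above gives
\[
\frac{W^*}{c} \;\leq\; \frac{M}{2^n}\cdot\frac{U^*_n}{c} \;\leq\; \hat W \;\leq\; \frac{M}{2^n}\cdot c\, U^*_n \;\leq\; 3c\, W^*,
\]
which is exactly the claimed $3c$-approximation.

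The only step that is not purely mechanical is the selection of $l$: the trade-off is between the intrinsic factor-$2$ loss of the embedding and the additive error $M 2^{n-l}$, and the argument $W^* \geq M$ is what makes $l=n$ the right choice. Everything else is bookkeeping—substitute $l=n$ into Lemma~\ref{lem:weighted}, apply the approximation bound of the unweighted oracle, and rescale by $M/2^n$.
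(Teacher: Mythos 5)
Your proof is correct and follows essentially the same route as the paper: set $l=n$ in Lemma~3.5, use $M=\max_{a,x}w(a,x)\leq \max_a\sum_x w(a,x)$ to absorb the additive term $M2^{n-l}=M$ into a factor of $3$, and rescale by $M/2^n$. The only difference is that you explicitly propagate the oracle's $c$-approximation through the sandwich, a bookkeeping step the paper leaves implicit.
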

\begin{proof}
  Let $l=n$ in Lemma~\ref{lem:weighted}. By definition $M = \max_{a,x}w(a,x) \leq \max_a\sum_xw(a,x)$, we have:
{\small
\begin{equation*}
\max_a\sum_xw(a,x)\leq \frac{M}{2^l}\max_a \sum_{(x,y)}w'_l(a,x,y) \leq 2\max_a\sum_xw(a,x)+M  \leq 3\max_a\sum_xw(a,x).
\end{equation*}
}
This is equivalent to:
{\small
\begin{equation*}
\frac{1}{3} \cdot \frac{M}{2^l}\max_a \sum_{(x,y)}w'_l(a,x,y) \leq \max_a\sum_xw(a,x)\leq \frac{M}{2^l}\max_a \sum_{(x,y)}w'_l(a,x,y).
\end{equation*}
}
\end{proof}

\vspace{-0.4in}
\section{Experiments}
\setlength{\textfloatsep}{6pt}

We evaluate our proposed algorithm $\mathtt{XOR\_MMAP}$ against two
baselines -- the Sample Average Approximation (SAA)
\cite{Sheldon10NetworkDesign} and the Mixed Loopy Belief Propagation
(Mixed LBP) \cite{LiuI13VariationalMMAP}. These two baselines are
selected to represent the two most widely used classes of methods that
approximate the embedded sum in MMAP problems in two different ways. SAA
approximates the intractable sum with a finite number of samples,
while the Mixed LBP uses a variational approximation.
We obtained the Mixed LBP implementation from the author of
\cite{LiuI13VariationalMMAP} and we use their default parameter
settings.
%
%
Since Marginal MAP problems are in general very hard and there is currently no exact solver that scales to reasonably large instances, our main comparison is on the
relative optimality gap: we first obtain the solution
$a_{method}$ for each approach. Then we compare the difference in
objective function $\log \sum_{x\in \setX} w(a_{method}, x) - \log \sum_{x\in \setX} w(a_{best}, x)$, in which $a_{best}$ is the best solution
among the three methods. Clearly a better algorithm will find a vector $a$
which yields a larger objective function.
The counting problem under a fixed solution $a$ is solved
using an exact counter ACE \cite{ChaviraDJ06Ace}, which is only used
for comparing the results of different MMAP solvers.

\begin{figure}[tb]
    \centering
    \includegraphics[width = 0.47\linewidth]{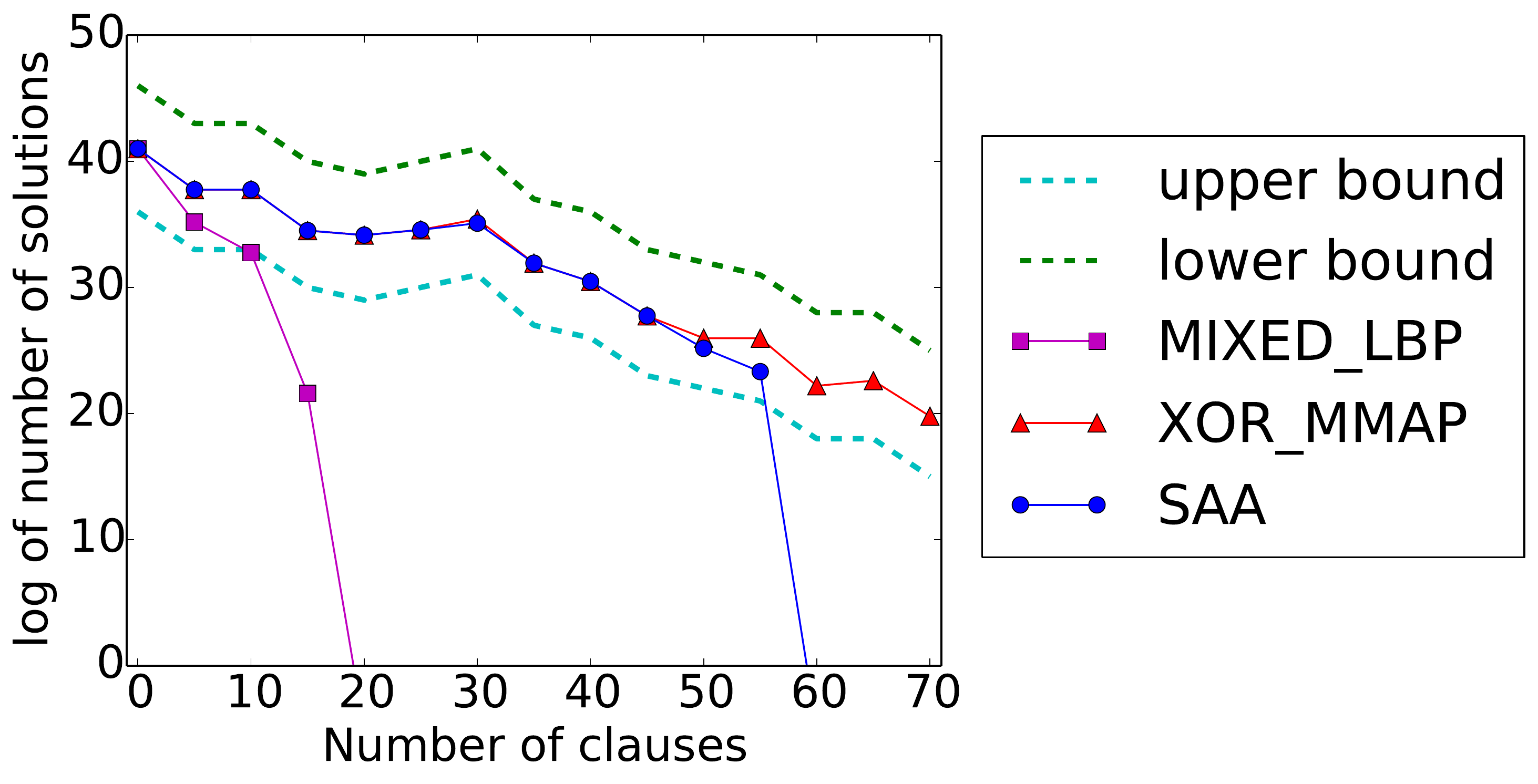}
    \includegraphics[width = 0.3\linewidth]{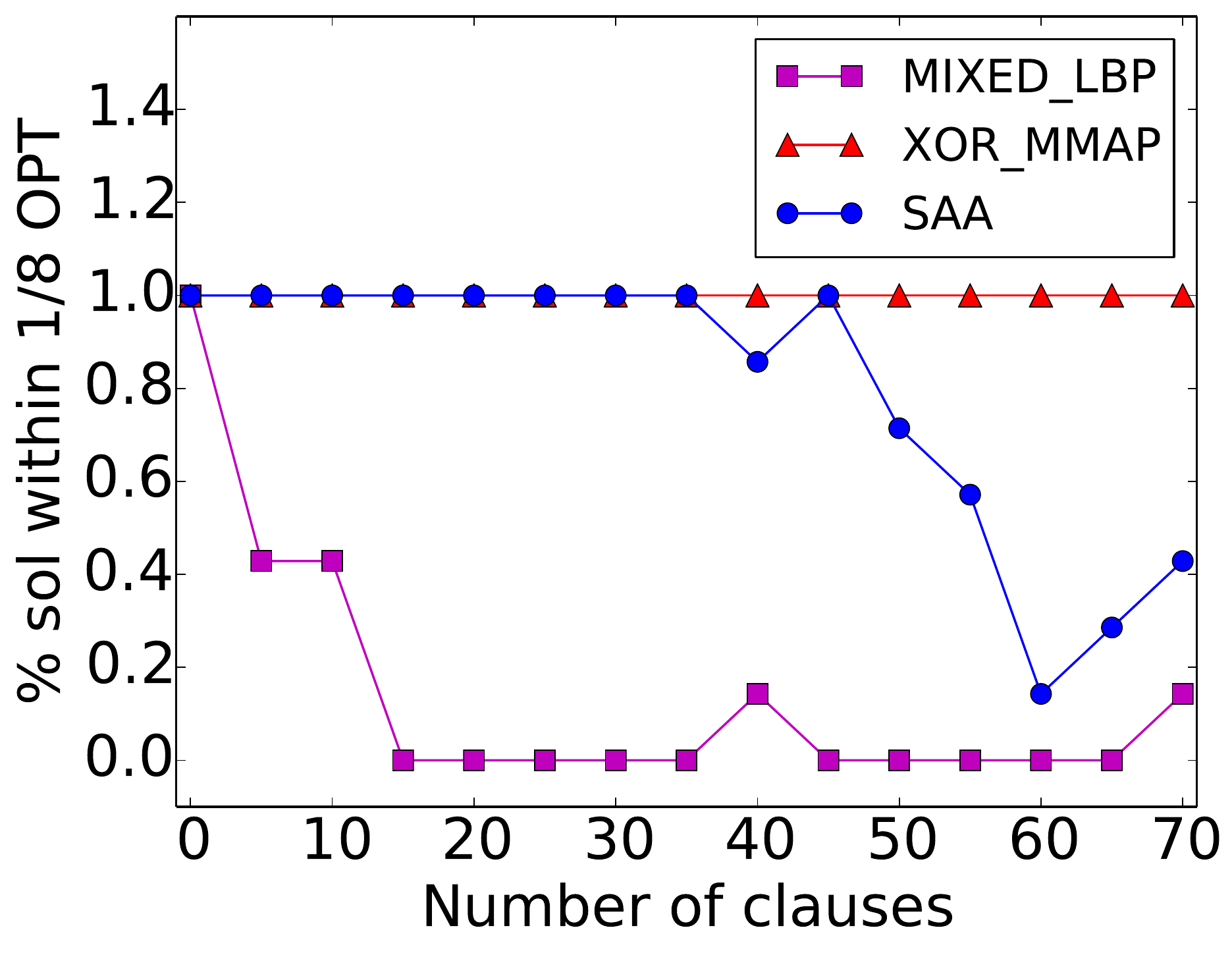}
    \caption{(Left) On median case, the solutions $a_0$ found by the proposed Algorithm $\mathtt{XOR\_MMAP}$ have higher objective $\sum_{x\in \setX} w(a_0, x)$ than the solutions found by SAA and Mixed LBP, on random 2-SAT instances with 60 variables and various number of clauses. Dashed lines represent the proved bounds from $\mathtt{XOR\_MMAP}$. (Right) The percentage of instances that each algorithm can find a solution that is at least 1/8 value of the best solutions among 3 algorithms, with different number of clauses.}
    \label{fig:exp_sat}
    \vspace{-0.2in}    
\end{figure}
\begin{figure}[tb]
    \centering
    \includegraphics[width = 0.32\linewidth]{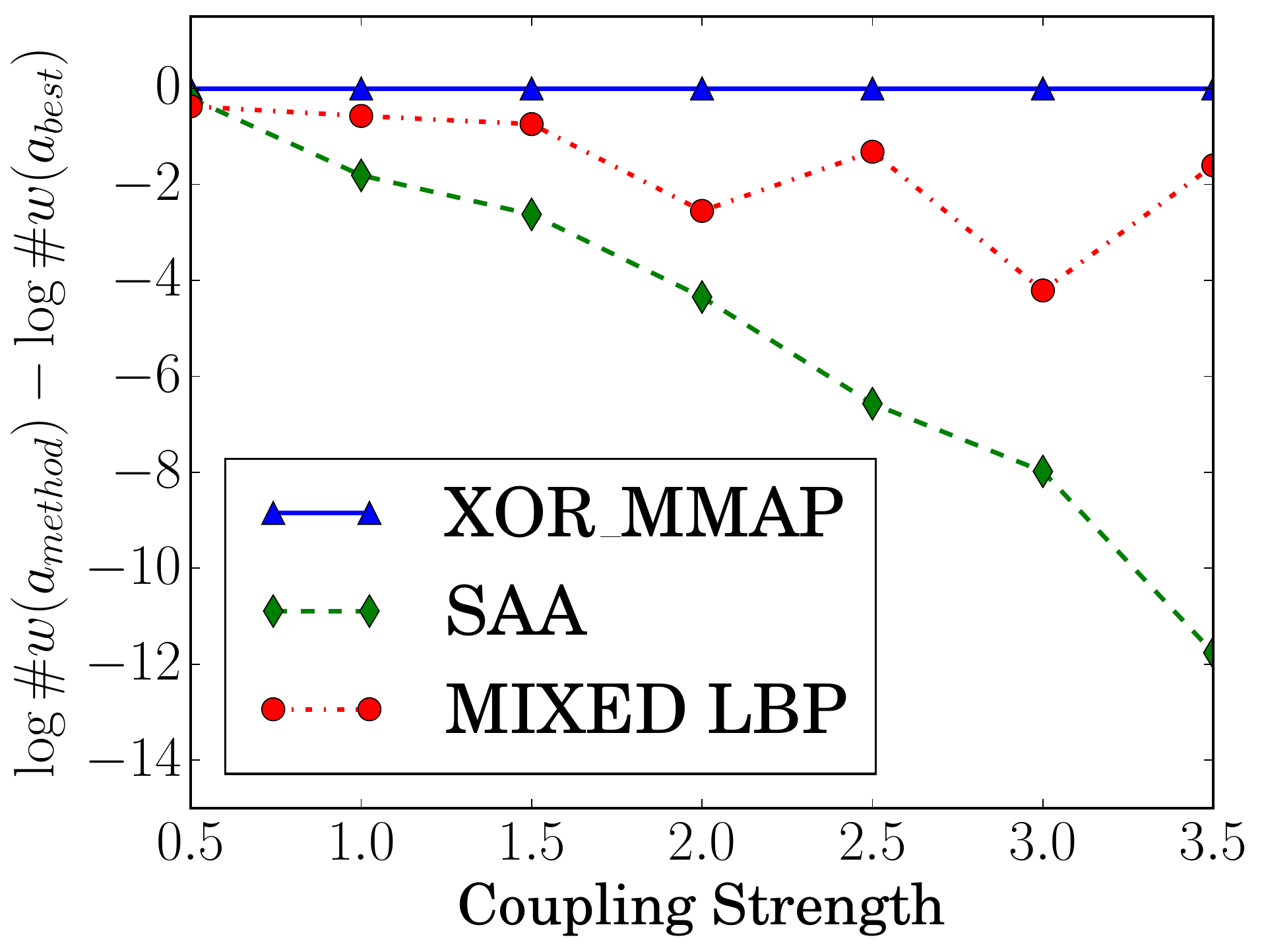}
    \includegraphics[width = 0.32\linewidth]{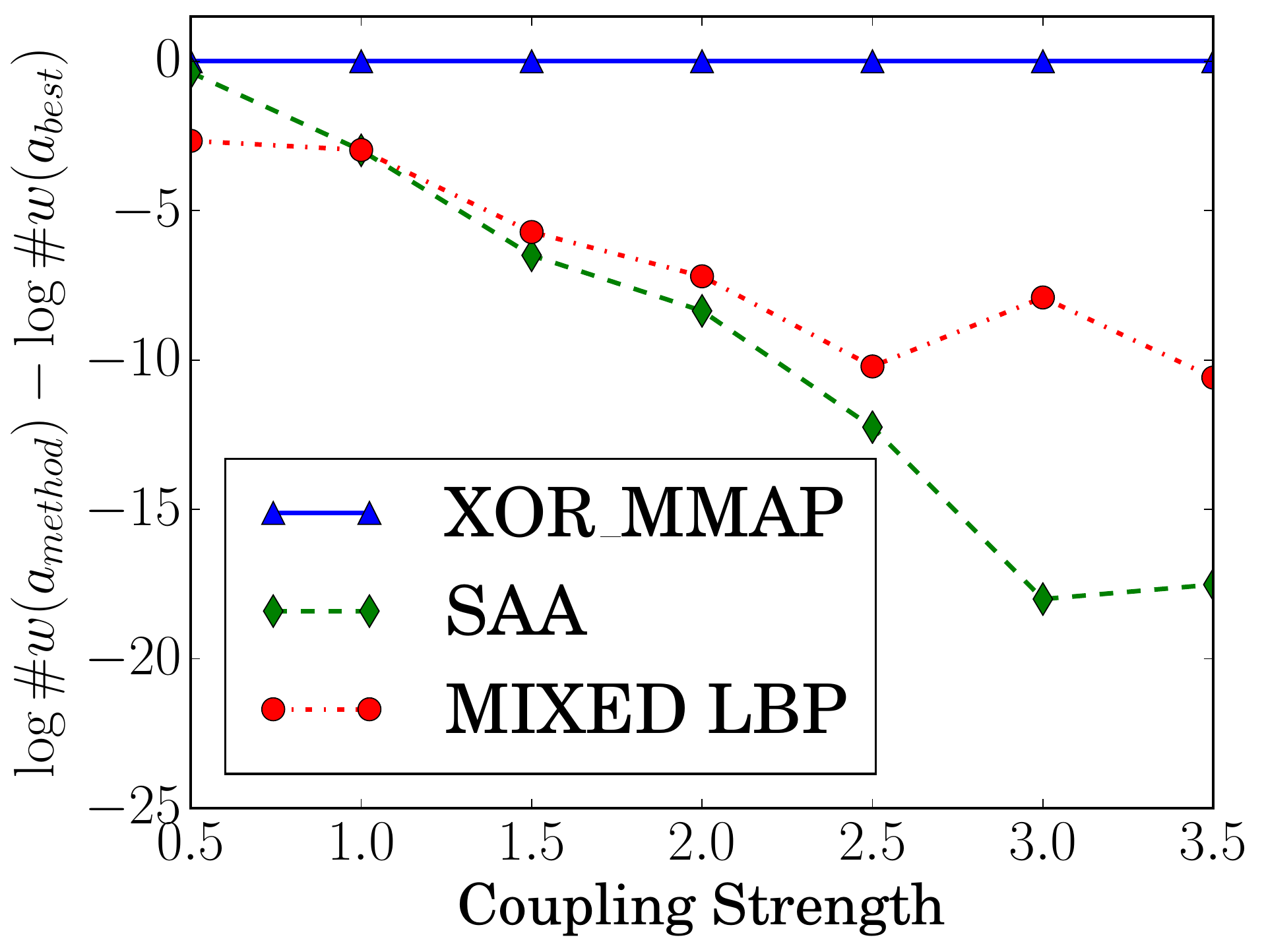}
    \includegraphics[width = 0.32\linewidth]{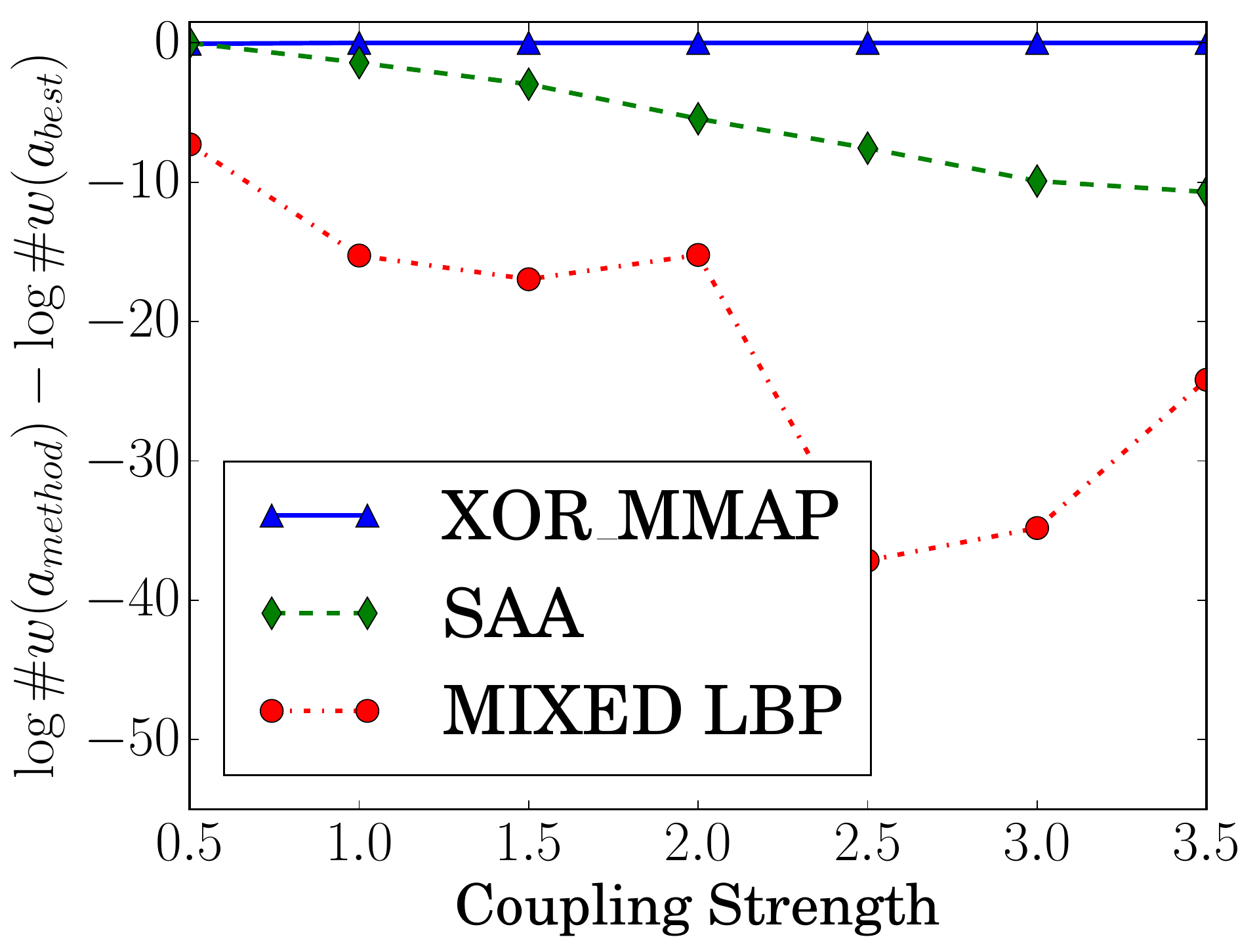}
    \includegraphics[width = 0.32\linewidth]{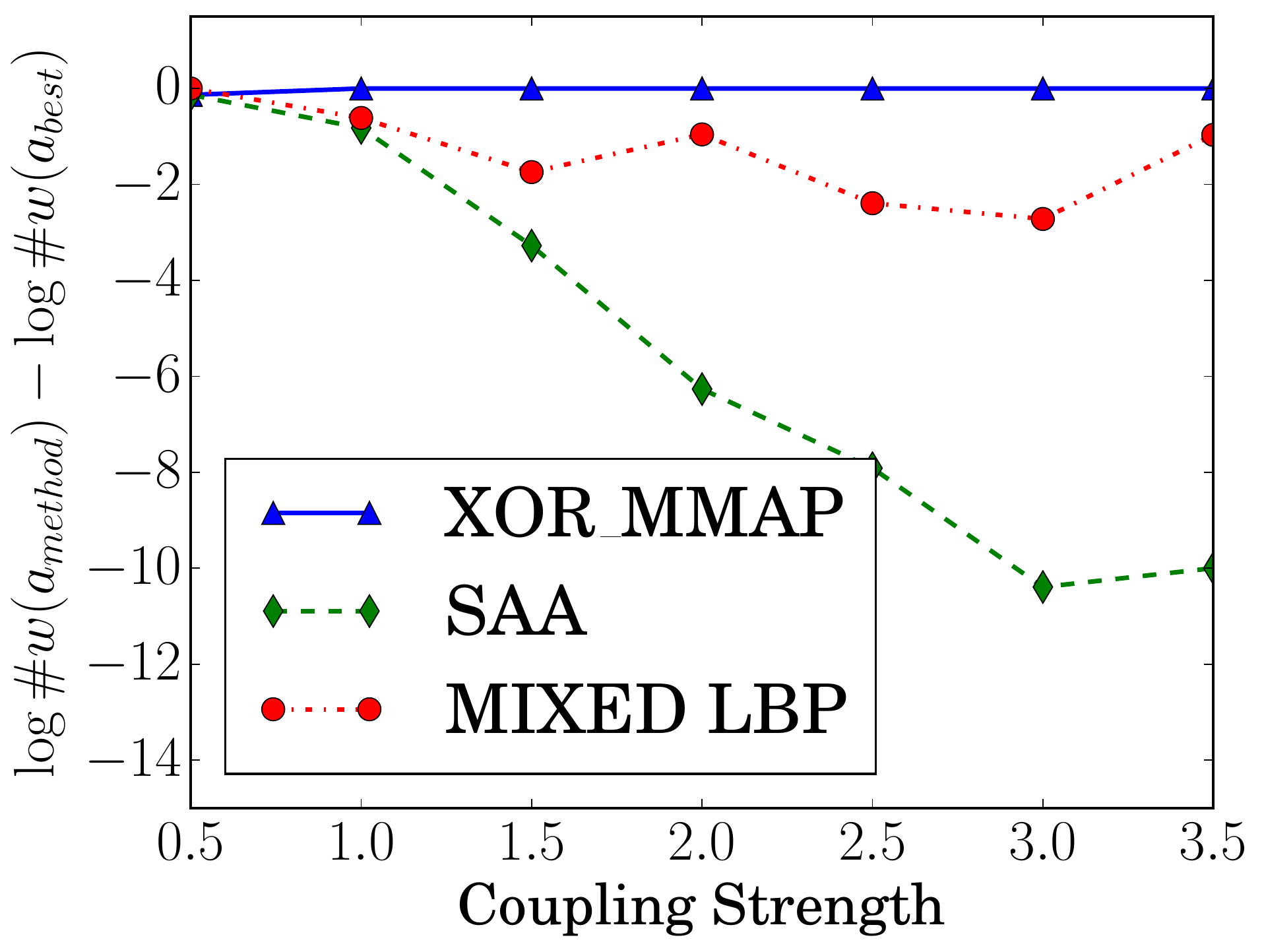}
    \includegraphics[width = 0.32\linewidth]{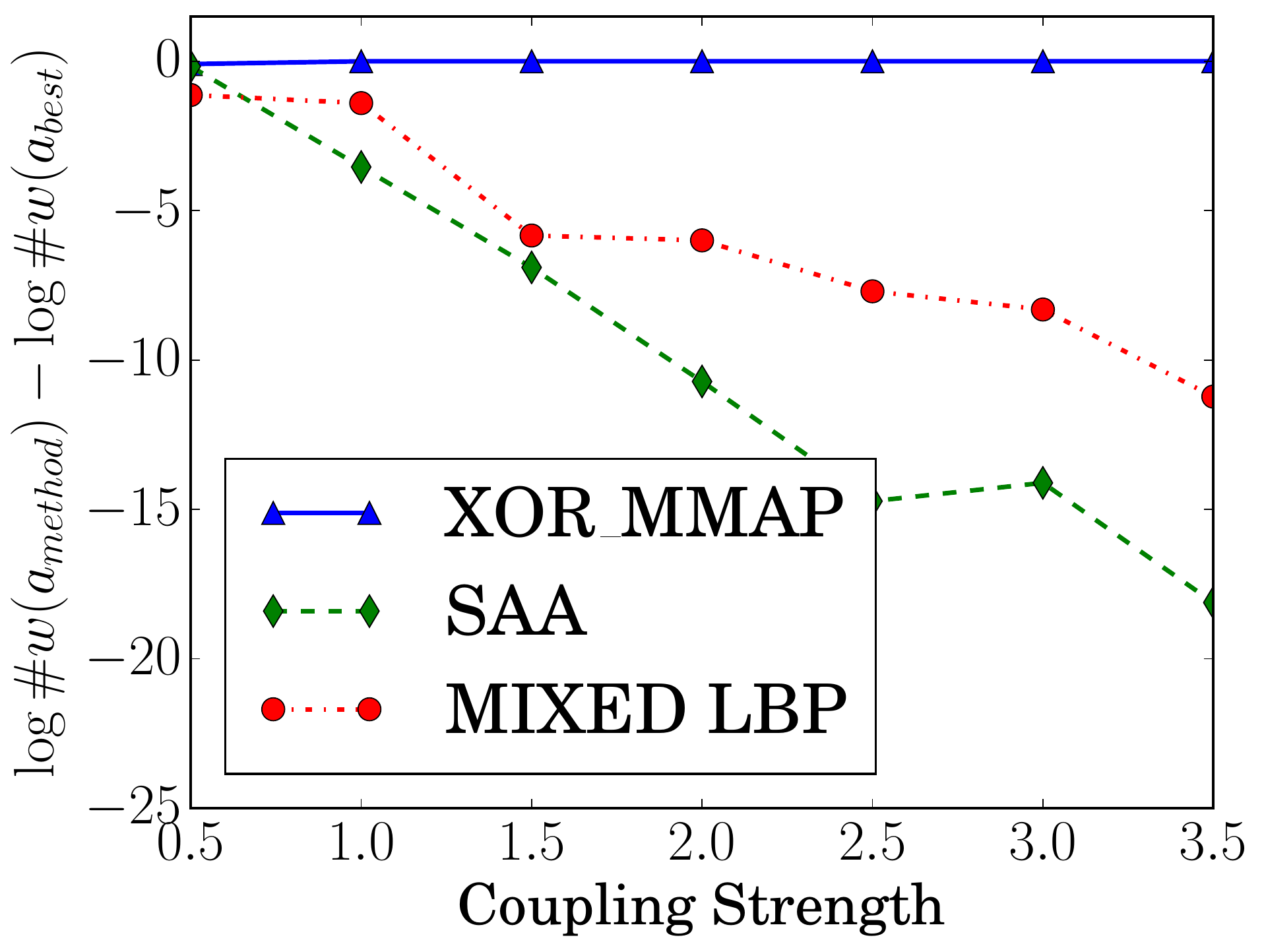}
    \includegraphics[width = 0.32\linewidth]{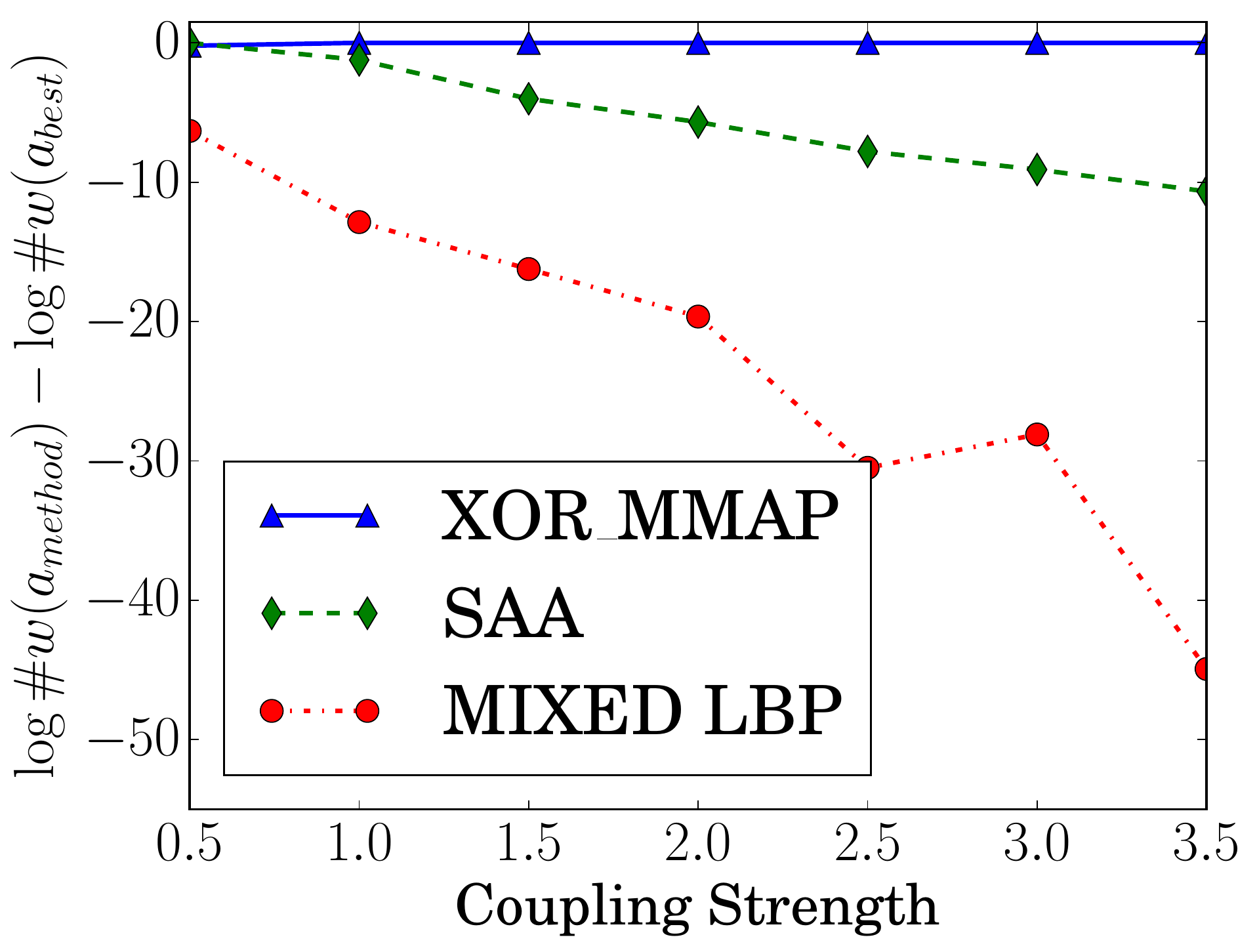}
    \caption{On median case, the solutions $a_0$ found by the proposed Algorithm $\mathtt{XOR\_MMAP}$ are better than the solutions found by SAA and Mixed LBP, on weighted 12-by-12 Ising models with mixed coupling strength. (Up) Field strength 0.01. (Down) Field strength 0.1. (Left) 20\% variables are randomly selected for maximization. (Mid) 50\% for maximization. (Right) 80\% for maximization.}
    \label{fig:exp_ising}
\end{figure}

Our first experiment is on unweighted random 2-SAT instances. Here,
$w(a,x)$ is an indicator variable on whether the 2-SAT instance is
satisfiable. The SAT instances have 60 variables, 20 of which are
randomly selected to form set $\setA$, and the remaining ones form set
$\setX$. The number of clauses varies from 1 to 70. For a fixed number
of clauses, we randomly generate 20 instances, and the left panel of
Figure~\ref{fig:exp_sat} shows the median objective function
$\sum_{x\in \setX} w(a_{method}, x)$ of the solutions found by the three 
approaches.
We tune the constants of our $\mathtt{XOR\_MMAP}$ so it
gives a $2^{10} = 1024$-approximation ($2^{-5}\cdot sol \leq OPT \leq
2^{5}\cdot sol$, $\delta = 10^{-3}$). The upper and lower bounds are shown in dashed lines. SAA uses 10,000 samples.
%
On average, the running time of our algorithm is
reasonable. When enforcing the $1024$-approximation bound,
the median time for a single $\mathtt{XOR\_k}$ procedure is in seconds,
although we occasionally have long runs (no more than 30-minute timeout).
%

As we can see from the left panel of Figure~\ref{fig:exp_sat}, both
Mixed LBP and SAA match the performance of our proposed
$\mathtt{XOR\_MMAP}$ on easy instances. However, as the number of
clauses increases, their performance quickly deteriorates. In fact, for
instances with more than 20 (60) clauses, typically the $a$
vectors returned by Mixed LBP (SAA) do not yield non-zero solution
values. Therefore we are not able to plot their performance
beyond the two values. At the same time, our algorithm
$\mathtt{XOR\_MMAP}$ can still find a vector $a$ yielding over
$2^{20}$ solutions on larger instances with more than 60 clauses,
while providing a 1024-approximation.

Next, we look at the performance of the three algorithms on weighted
instances. Here, we set the number of replicates $T=3$ for our
algorithm $\mathtt{XOR\_MMAP}$, and we repeatedly start the algorithm
with an increasing number of XOR constraints $k$, until it completes
for all $k$ or times out in an hour.  For SAA, we use 1,000 samples,
which is the largest we can use within the memory limit. All algorithms are
given a one-hour time and a 4G memory limit.

The solutions found by $\mathtt{XOR\_MMAP}$ are considerably better
than the ones found by Mixed LBP and SAA on weighted instances.
Figure~\ref{fig:exp_ising} shows the performance of the three algorithms
on 12-by-12 Ising models with mixed coupling strength, different
field strengths and number of variables to form set $\setA$.
All values in the figure are median values across 20 instances
(in $\log_{10}$).
In all 6 cases in Figure~\ref{fig:exp_ising}, our algorithm
$\mathtt{XOR\_MMAP}$ is the best among the three approximate
algorithms.
In general, the difference in performance increases as 
the coupling strength increases.
These instances are challenging for the state-of-the-art complete
solvers. For example, the state-of-the-art exact solver AOBB with
mini-bucket heuristics and moment matching \cite{Marinescu2015AOBB}
runs out of 4G memory on 60\% of instances with 20\%
variables randomly selected as max variables.
%
%
%
We also notice that the solution found by our $\mathtt{XOR\_MMAP}$ is
already close to the ground-truth. On smaller 10-by-10 Ising models
which the exact AOBB solver can complete within the memory limit, the
median difference between the log10 count of the solutions found by $\mathtt{XOR\_MMAP}$ and those found by the exact solver is 0.3, while the differences between the solution values of $\mathtt{XOR\_MMAP}$ against those of the Mixed BP or SAA are on the order of 10.

\begin{figure}[tb]
    \centering
    \begin{subfigure}[b]{0.33\linewidth}
    \raisebox{0.4\height}{\includegraphics[width = 1\linewidth]{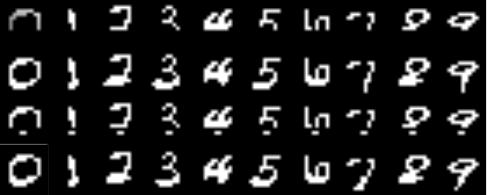}}
    \end{subfigure}
    \begin{subfigure}[b]{0.3\linewidth}
    \raisebox{0.1\height}{\includegraphics[width = 1\linewidth]{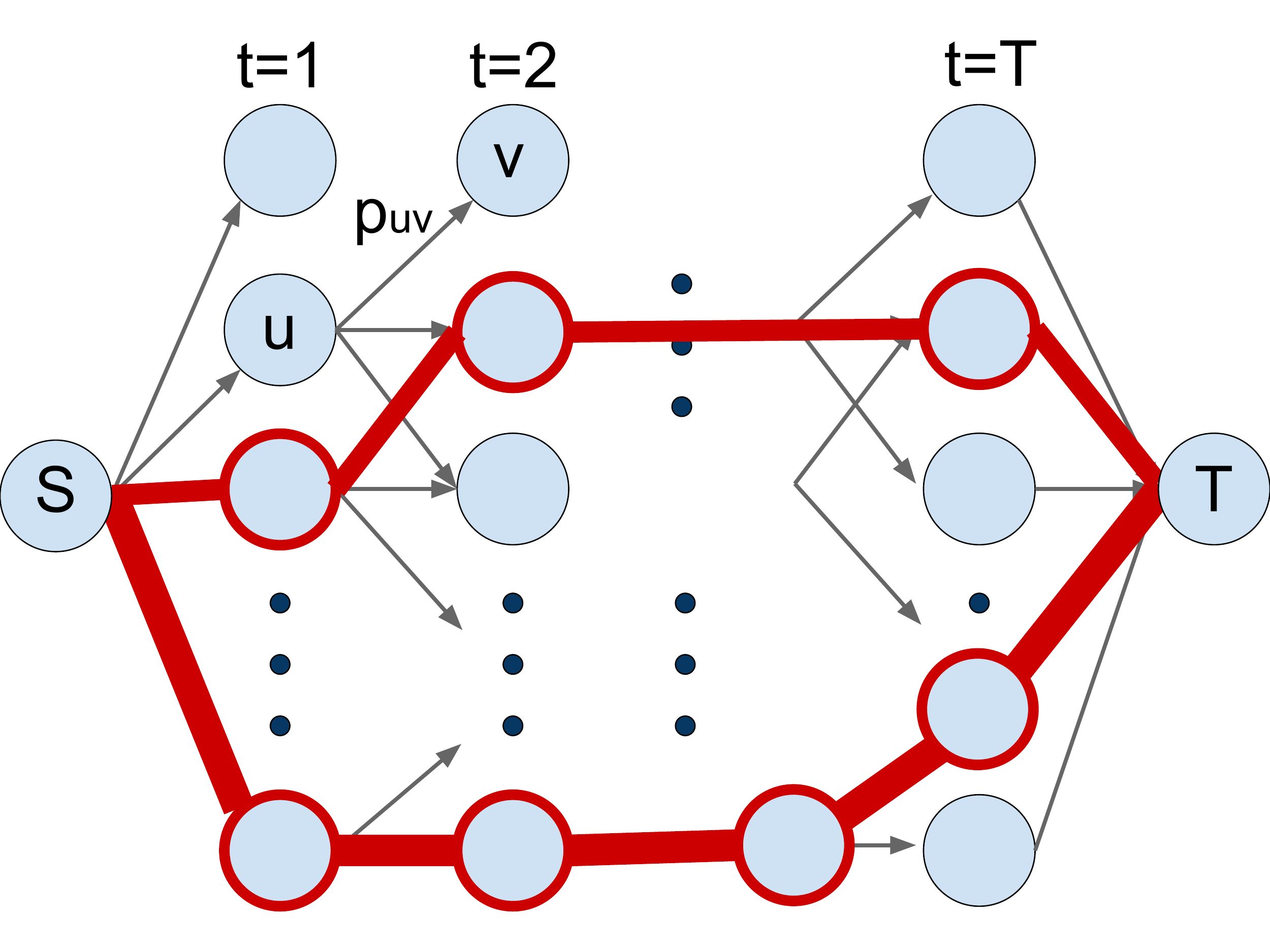}}
    \end{subfigure}
    \begin{subfigure}[b]{0.32\linewidth}
    \includegraphics[width = 1\linewidth]{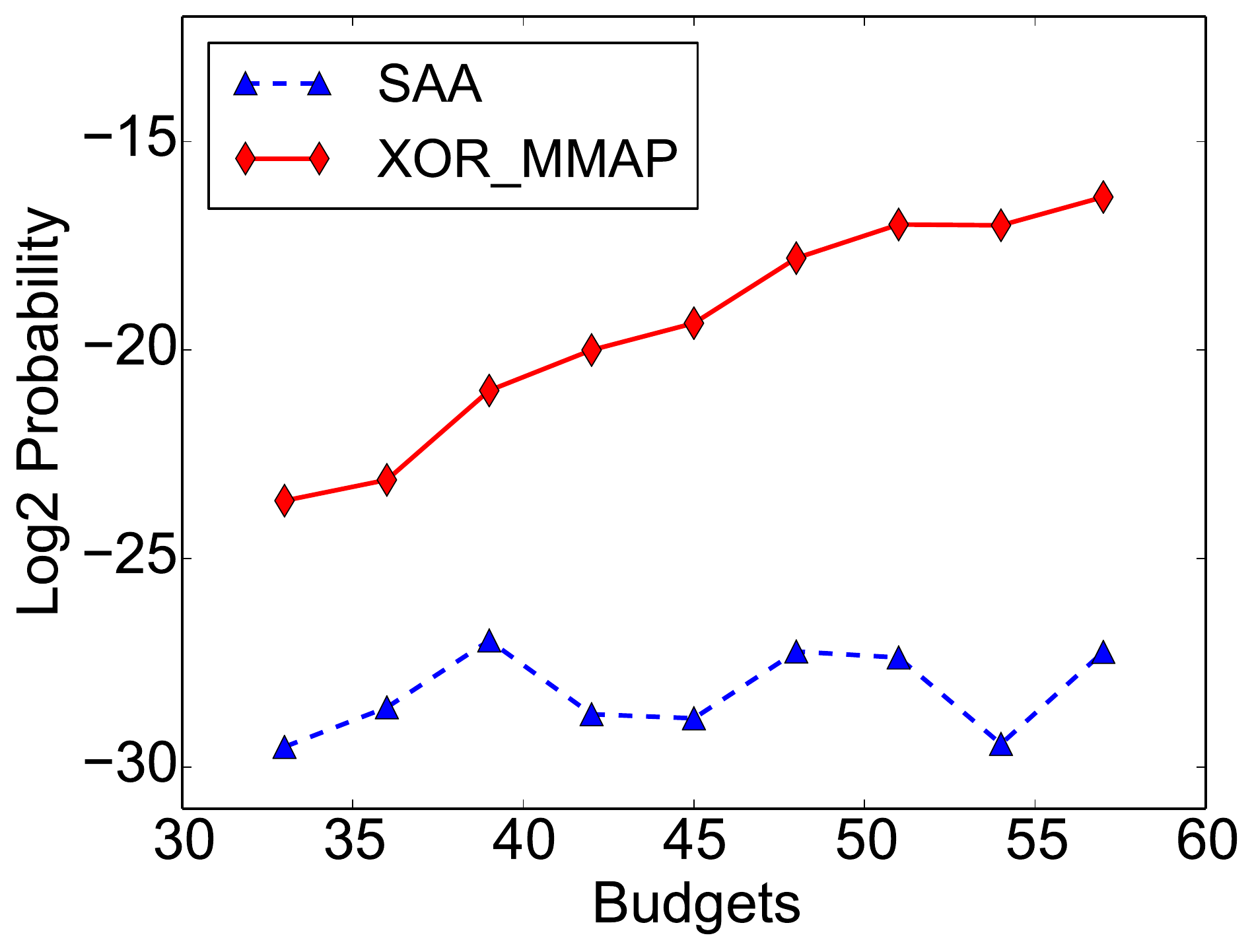}
    \end{subfigure}
    \vspace{-0.15in}    
    \caption{(Left) The image completion task. Solvers are given
      digits of the upper part as shown in the first row. Solvers need to complete the digits based on a two-layer deep belief network and the upper part. (2nd Row) completion given by $\mathtt{XOR\_MMAP}$. (3rd Row) SAA. (4th Row)  Mixed Loopy Belief Propagation. (Middle) Graphical illustration of the network cascade problem. Red circles are nodes to purchase. Lines represent cascade probabilities. See main text. (Right) Our $\mathtt{XOR\_MMAP}$ performs better than SAA on a set of network cascade benchmarks, with different budgets.}
    \label{fig:exp_dbn}
\end{figure}

We also apply the Marginal MAP solver to an image completion task. We
first learn a two-layer deep belief network \cite{BengioLPL06LayerTraining,Hinton06RBM} from a 14-by-14
MNIST dataset. Then for a binary image that only contains the upper
part of a digit, we ask the solver to complete the lower part, based
on the learned model.
This is a Marginal MAP task, since one needs to integrate over the
states of the hidden variables, and query the most likely states of
the lower part of the image. Figure~\ref{fig:exp_dbn} shows the result
of a few digits. As we can see, SAA performs poorly. In most cases, it
only manages to come up with a light dot for all 10 different
digits. Mixed Loopy Belief Propagation and our proposed
$\mathtt{XOR\_MMAP}$ perform well. The good performance of Mixed LBP
may be due to the fact that the  weights on pairwise factors in the learned deep belief network are not very combinatorial. 
%

Finally, we consider an application that applies decision-making
into machine learning models. 
This network design application maximizes the spread of cascades in
networks, which is important in the domain of social networks and
computational sustainability.
In this application, we are given a stochastic graph, in which the
source node at time $t=0$ is affected. For a node $v$ at time $t$, it
will be affected if one of its ancestor nodes at time $t-1$ is
affected, and the configuration of the edge connecting the two nodes
is ``on''. An edge connecting node $u$ and $v$ has probability
$p_{u,v}$ to be turned on. A node will not be affected if it is not
purchased.
Our goal is to purchase a set of nodes within a finite
budget, so as to maximize the probability that the target node is
affected.
We refer the reader to \cite{Sheldon10NetworkDesign} for more
background knowledge.
This application cannot be captured by graphical models due to global
constraints. Therefore, we are not able to run mixed LBP on this
problem. We consider a set of synthetic networks, and compare the
performance of SAA and our $\mathtt{XOR\_MMAP}$ with different budgets.
As we can see from the right panel of Figure~\ref{fig:exp_dbn}, the
nodes that our $\mathtt{XOR\_MMAP}$ decides to purchase result in higher
probabilities of the target node being affected, compared to SAA. 
Each dot in the figure is the median value over 30 networks generated
in a similar way.
%

\vspace{-0.1in}
\section{Conclusion}
\vspace{-0.15in}
We propose $\mathtt{XOR\_MMAP}$, a novel constant approximation
algorithm to solve the Marginal MAP problem. Our approach represents
the intractable counting subproblem with queries to NP oracles,
subject to additional parity constraints. In our algorithm, the entire
problem can be solved by a single optimization.
We evaluate our approach on several machine learning and decision-making applications. We are able to show that $\mathtt{XOR\_MMAP}$
outperforms several state-of-the-art Marginal MAP solvers.
%
$\mathtt{XOR\_MMAP}$ provides a new angle to solving the Marginal MAP problem, opening the door to new research directions and applications in real world domains. 


%
\vspace{-0.15in}
\subsubsection*{Acknowledgments}
\vspace{-0.1in}
This research was supported by National Science Foundation (Awards \#0832782, 1522054, 1059284, 1649208) and Future of Life Institute (Grant 2015-143902). 

{
\small
\bibliography{BayesianOpt2,BooleanFunction,message_passing_parity_constraint}
\bibliographystyle{plain}
}

\newpage
\section*{Appendix}

\subsection*{Proof of Claim 2 of Lemma 3.3}
The proof is almost the same as Claim 1, except that we need to use a union bound to let the property hold for all $a \in \mathcal{A}$ simultaneously. As a result, the success probability will be $1-\frac{\delta}{n}$ instead of $1-\frac{\delta}{n 2^m}$. 
Let $Y_i(a) = \max_{x^{(i)}\in \mathcal{X}, h^{(i)}_{k+c}(x^{(i)})=\mathbf{0}} w(a_0,x^{(i)})$ for $i=1,\ldots,T$. If for all $a\in \mathcal{A},\ \#w(a)< 2^{k}$, then for any fixed $a_0$,
          \begin{equation}
            \pr{\sum_{i=1}^{T}Y_i(a_0) > \frac{T}{2}}\leq e^{-D\left(\frac{1}{2}\|\frac{2^c}{(2^c-1)^2}\right)T}\leq \frac{\delta}{n2^m}.
          \end{equation}
    Using Union bound, we have:
    \begin{equation}
        \pr{\max\limits_{a\in \mathcal{A}} \sum_{i=1}^{T}Y_i(a)>\frac{T}{2}}\leq \pr{\exists a\in \mathcal{A}, \sum_{i=1}^{T}Y_i(a)>\frac{T}{2}}\leq 2^m \frac{\delta}{n2^m}=\frac{\delta}{n}. 
    \end{equation}
    Therefore, w.p.  $1-\frac{\delta}{n}$, we have $\forall a$, $\sum_{i=1}^{T}Y_i(a) \leq \frac{T}{2}$, which implies that $\mathtt{XOR\_K}$ returns $\mathbf{false}$.

\subsection*{Algorithm Variants to Reduce the Number of Replicates $T$}

\begin{table}[h]
\begin{minipage}[t]{.5\textwidth}
\begin{algorithm}[H]
Sample $T$ pair-wise independent hash functions \\
~~~~~~~~~~$h_k^{(1)},h_k^{(2)},\ldots,h_k^{(T)}:\mathcal{X}\to \{0,1\}^{k}$\;
Query Oracle
\begin{equation*}
\begin{split}
   \max\limits_{a\in\mathcal{A},x^{(i)}\in \mathcal{X}}& \sum_{i=1}^{T}w(a,x^{(i)})\\
  \textbf{s.t.}& \quad h^{(i)}_k(x^{(i)})=\mathbf{0}, \quad i=1,\ldots,T
\end{split}
\end{equation*}
Return \textbf{true} if the max value is no less than $q$, otherwise return \textbf{false}.
\caption{$\mathtt{XOR\_K+}$($w:\mathcal{A}\times \mathcal{X}\to \{0,1\},k,T, q$)}\label{alg:COMP+}
\end{algorithm}
\end{minipage}
~~~~~
\begin{minipage}[t]{.45\textwidth}
\begin{algorithm}[H]
$k = n$\;
\While{$k>0$}{
     Run $\mathtt{XOR\_K}$($w,k,T$) $r$ times. Each time $\mathtt{XOR\_K}$ returns either \textbf{true} or \textbf{false}\;
  \If{half independent trials return \textbf{true}}{
    Return $2^k$\;
  }
  $k\gets k-1$\;
}
Return 1.
\caption{$\mathtt{XOR\_MMAP+}$($w:\mathcal{A}\times \mathcal{X}\to \{0,1\}$,$n=\log_2|\mathcal{X}|$,$m=\log_2|\mathcal{A}|$,$T$,$r$)}\label{alg:XOR_MMAP+}
\end{algorithm}
\end{minipage}
\end{table}

We develop a few variants of the original algorithm so as to reduce $T$ -- the number of replicates to guarantee a constant approximation bound. 

\noindent {\bf Option 1: Use Binary Search in $\mathtt{XOR\_MMAP}$}: 
Using binary search, we only need to check $\log_2 n$ outcomes (instead of $n$ outcomes) of calls to function $\mathtt{XOR\_K}$($w,k,T$) to nail down the correct $k$. Because our proof to Theorem 3.2 is relies on a probabilistic bound that all outcomes of the calls to $\mathtt{XOR\_K}$($w,k,T$) satisfy the two claims of Lemma 3.3, with fewer checks to the outcomes of $\mathtt{XOR\_K}$($w,k,T$), we can reduce the number of replicates $T$ to $ \lceil\frac{m\ln2 +\ln\log_2 n+\ln(1/\delta)}{\alpha^*(c)}\rceil$, while preserving the same probabilistic bound. 

\noindent {\bf Option 2: Run $\mathtt{XOR\_K}$ Multiple Times}:
Algorithm $\mathtt{XOR\_MMAP+}$ achieves similar approximation bounds with smaller $T$, but at the price of running the  $\mathtt{XOR\_K}$ procedure $r > 1$ times. 

\begin{thm}\label{thm:I_m_a_p}
 Let $p=\frac{2^c}{(2^c -1)^2}$. For $T\geq \frac{m\ln2 +\ln \frac{1}{p}}{\alpha^*(c)}$, $r \geq \frac{\ln\frac{n}{\delta}}{D(\frac{1}{2}\|p)}$, with probability $1-\delta$, $\mathtt{XOR\_MMAP+}$($w$,$n$,$m$,$T,r$) outputs a $2^{2c}$-approximation to the Marginal MAP Problem: $\max\limits_{a\in \mathcal{A}}\#w(a)$.
\end{thm}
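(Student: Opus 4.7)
The plan is to reuse the skeleton of the proof of Theorem~\ref{thm:constant_approx} but to shift the probability amplification: instead of forcing a single $\mathtt{XOR\_K}$ call to succeed with probability $1 - \delta/(n 2^m)$ by blowing up $T$, I will only ask a single call to succeed with probability bounded away from $1/2$, and then boost it by a majority vote over the $r$ outer runs executed inside $\mathtt{XOR\_MMAP+}$. Concretely, I would first redo the single-call analysis of Lemma~\ref{lem:COMP} with the new $T$: the Chernoff step in that proof gives $e^{-\alpha^*(c) T} \leq p \cdot 2^{-m}$ for any $T \geq (m\ln 2 + \ln(1/p))/\alpha^*(c)$. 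Hence one call of $\mathtt{XOR\_K}(w, k-c, T)$ returns \textbf{true} with probability $\geq 1 - p\cdot 2^{-m}$ in the ``$\exists a^*$'' case, and one call of $\mathtt{XOR\_K}(w, k+c, T)$ returns \textbf{false} with probability $\geq 1 - p$ in the ``$\forall a$'' case (the $2^m$ union bound over $\mathcal{A}$ in Claim 2 is absorbed by the $2^{-m}$ in the per-strategy tail bound).

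Since $p = 2^c/(2^c-1)^2 < 1/2$ for $c \geq 2$, both single-call biases lie strictly above $1/2$, so a standard Chernoff/KL bound on the majority of $r$ i.i.d. Bernoulli trials shrinks the failure probability to at most $e^{-r D(1/2 \| p\cdot 2^{-m})}$ on the ``$\exists$'' side and $e^{-r D(1/2\|p)}$ on the ``$\forall$'' side. Monotonicity of $D(1/2\|\cdot)$ as its second argument moves away from $1/2$ makes the former bound no larger than the latter, so a single choice $r \geq \ln(n/\delta)/D(1/2\|p)$ drives both to at most $\delta/n$. A final union bound over the at most $n$ values of $k$ visited by $\mathtt{XOR\_MMAP+}$ shows that with probability $\geq 1 - n \cdot (\delta/n) = 1 - \delta$, the amplified vote reports the correct side at every level. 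Conditioning on this clean event, the argument from the proof of Theorem~\ref{thm:constant_approx} applies verbatim, so the returned value lies in $[2^{k_0 - c}, 2^{k_0 + c}]$, which is a $2^{2c}$-approximation.

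The main obstacle, as usual for this style of proof, is bookkeeping the dependencies among the three parameters $T$, $r$, and $D(1/2\|p)$. Specifically, one must verify that $p < 1/2$ so that $D(1/2\|p) > 0$ and the Chernoff amplification is actually meaningful, and must invoke monotonicity of $D(1/2\|\cdot)$ to fold the asymmetry between the ``$\exists$'' side (where the single-call bias is exponentially close to $1$ in $m$) and the ``$\forall$'' side (where the bias is only $1 - p$) into one combined $r$-bound. Beyond those two small checks, the proof is a direct adaptation of Lemma~\ref{lem:COMP} and Theorem~\ref{thm:constant_approx}, and the saving in $T$ relative to Theorem~\ref{thm:constant_approx} comes precisely from paying the $\ln(n/\delta)$ factor in $r$ rather than in $T$.
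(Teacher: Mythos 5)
Your proof is correct and follows essentially the same route as the paper's own (very terse) proof sketch: use the reduced $T$ to get per-call success probabilities $1-p/2^m$ on the ``$\exists$'' side and $1-p$ on the ``$\forall$'' side (with the $2^m$ union bound over $\mathcal{A}$ already absorbed), amplify by majority vote over $r$ runs via the Chernoff--KL bound $e^{-rD(\frac{1}{2}\|p)}\leq \delta/n$, and finish with a union bound over the $n$ levels of $k$. If anything, your bookkeeping is cleaner than the paper's, which misattributes the final $\delta/n\to\delta$ step to a union bound over $\mathcal{A}$ rather than over the values of $k$, and omits the monotonicity observation $D(\frac{1}{2}\|p2^{-m})\geq D(\frac{1}{2}\|p)$ that you correctly supply to cover both sides with a single choice of $r$.
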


\proofsketch
  By the same argument in the proof of \ref{lem:COMP}, for $T\geq \frac{m\ln2 +\ln\frac{1}{p}}{\alpha^*(c)}$, we have 
    \begin{itemize}
    \item Suppose $\exists a^* \in \mathcal{A}$, s.t. $\#w(a^*)\geq 2^{k}$, then with probability $1-\frac{p}{2^m}$, $\mathtt{XOR\_K}(w,k-c,T)$ returns $\mathbf{true}$.
    \item Suppose $\forall a_0\in \mathcal{A}$, $\#w(a_0) < 2^{k}$, then with probability  $1-p$, $\mathtt{XOR\_K}(w,k+c,T)$ returns $\mathbf{false}$.
  \end{itemize}
  
  Then we can use $r$ independent trials to amplify the success probability from $1-p$ to $1-e^{-D(\frac{1}{2}\|p)\cdot r}$, which is larger than $1-\delta/n$. Then we apply union bound for all $a\in \setA$, to guarantee a $2^{2c}$-approximation with probability larger than $1-\delta$.

\noindent {\bf Option 3: Biased $\mathtt{XOR\_K}$ Procedure}:
Furthermore, instead of returning $\mathbf{true}$ if at least $\lceil T/2 \rceil$ replicates are 1 in the procedure $\mathtt{XOR\_K}$, we develop $\mathtt{XOR\_K+}$ which returns $\mathbf{true}$ if no less than $q$ replicates are 1. Then $q$ becomes a parameter that we can tune. As a result, we can further reduce the number of trials $T$. 


\begin{lem}\label{lem:balance}
  Procedure $\mathtt{XOR\_K+}$($w,k,T,q$) satisfies the following properties:
    \begin{itemize}
    \item Suppose $\exists a^* \in \mathcal{A}$, s.t. $\#w(a^*)\geq 2^{k}$, then with probability $1-e^{-D(\frac{T-q+1}{T}\|\frac{2^c}{(2^c-1)^2})T}$, $\mathtt{XOR\_K+}(w,k-c,T,q)$ returns $\mathbf{true}$.
    \item Suppose $\forall a_0\in \mathcal{A}$, $\#w(a_0) < 2^{k}$, then with probability  $1-2^me^{-D(\frac{q}{T}\|\frac{2^c}{(2^c-1)^2})T}$, $\mathtt{XOR\_K+}(w,k+c,T,q)$ returns $\mathbf{false}$.
  \end{itemize}
\end{lem}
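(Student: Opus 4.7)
The plan is to mimic the two-part argument of Lemma~\ref{lem:COMP}, replacing the threshold $\lceil T/2\rceil$ by the tunable parameter $q$, so that the Chernoff bound is now taken with respect to the deviation $q/T$ (or $(T-q+1)/T$) from the success probability guaranteed by Theorem~\ref{thm:naive_COMP}, instead of from $1/2$.

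For Claim~1, I would pick $a_0 = a^{\ast}$ and define the Bernoulli indicators $X^{(i)}(a_0) = \max_{x^{(i)} \in \setX,\, h^{(i)}_{k-c}(x^{(i)}) = \mathbf{0}} w(a_0, x^{(i)})$ for $i = 1,\dots,T$. By Theorem~\ref{thm:naive_COMP} (first item), each $X^{(i)}(a_0) = 1$ independently with probability at least $1 - p$, where $p = 2^c/(2^c-1)^2$. The procedure $\mathtt{XOR\_K+}$ fails (returns \textbf{false}) precisely when $\sum_i X^{(i)}(a_0) \leq q-1$, or equivalently when the complementary Bernoulli variables $1 - X^{(i)}(a_0)$ (each with mean at most $p$) sum to at least $T - q + 1$. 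Applying the standard Chernoff bound for the upper tail, the failure probability is at most $\exp\bigl(-D\bigl(\tfrac{T-q+1}{T}\,\|\,p\bigr)\,T\bigr)$, which matches the desired guarantee. Since $\mathtt{XOR\_K+}$ then maximizes over all $a \in \setA$, attaining $\geq q$ at $a_0 = a^{\ast}$ is enough to force the procedure to return \textbf{true}.

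For Claim~2, I would define $Y^{(i)}(a) = \max_{x^{(i)} \in \setX,\, h^{(i)}_{k+c}(x^{(i)}) = \mathbf{0}} w(a, x^{(i)})$. By the second item of Theorem~\ref{thm:naive_COMP}, for each fixed $a_0 \in \setA$ we have $\Pr[Y^{(i)}(a_0) = 1] \leq p$, independently across $i$. A Chernoff upper-tail bound gives $\Pr\bigl[\sum_i Y^{(i)}(a_0) \geq q\bigr] \leq \exp\bigl(-D\bigl(\tfrac{q}{T}\,\|\,p\bigr)\,T\bigr)$ for each fixed $a_0$. The key step is then a union bound over all $2^m$ choices of $a \in \setA$, which yields the factor $2^m$ in the claimed failure probability and lets us conclude that, simultaneously for every $a$, the sum stays below $q$, so $\mathtt{XOR\_K+}$ returns \textbf{false}.

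The only subtlety I anticipate is making sure the direction of the Chernoff inequality is applied correctly in the two cases, since in Claim~1 one bounds the lower tail of a sum with mean close to $1$, while in Claim~2 one bounds the upper tail of a sum with mean close to $0$; by passing to the complementary variables in Claim~1, both reduce to the standard form $D(q/T \,\|\, p)$ or $D((T-q+1)/T \,\|\, p)$ that appears in the statement. Everything else is a direct repetition of the argument used in Lemma~\ref{lem:COMP}, with $q$ playing the role of $\lceil T/2\rceil$.
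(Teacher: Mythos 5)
Your proposal is correct and follows essentially the same route as the paper's own proof sketch: define $X^{(i)}(a)$ and $Y^{(i)}(a)$ exactly as in Lemma~\ref{lem:COMP}, bound the lower tail at $a^{\ast}$ by $e^{-D(\frac{T-q+1}{T}\|p)T}$ for the first claim, and combine the upper-tail bound $e^{-D(\frac{q}{T}\|p)T}$ with a union bound over the $2^m$ assignments for the second. The only addition you make is spelling out the passage to complementary Bernoulli variables to get the Chernoff direction right, which the paper leaves implicit.
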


\proofsketch
  Let $X^{(i)}(a)$ and $Y^{(i)}(a)$ be defined in the same way as in the proof of Lemma \ref{lem:COMP}. Similarly we have 
  \begin{equation}\label{eq:++X}
    \pr{\max\limits_{a\in\mathcal{A}}\sum_{i=1}^{T}X^{(i)}(a) \leq q-1}\leq \pr{\sum_{i=1}^{T}X^{(i)}(a^*) \leq q-1} \leq e^{-D(\frac{T-q+1}{T}\|\frac{2^c}{(2^c-1)^2})T},
  \end{equation}
  
  and,
    \begin{equation}\label{eq:++Y}
    \pr{\max\limits_{a\in\mathcal{A}}\sum_{i=1}^{T}Y^{(i)}(a) \geq q} \leq \pr{\forall a \in \mathcal{A}, \sum_{i=1}^{T}Y^{(i)}(a) \geq q} \leq 2^me^{-D(\frac{q}{T}\|\frac{2^c}{(2^c-1)^2})T}.
  \end{equation}
  

Based on Lemma \ref{lem:balance}, we can reduce the number of independent trials $T$, by picking a $q$ larger than $\frac{T}{2}$, which balances the right hand sides of \eqref{eq:++X} and \eqref{eq:++Y}. In other words, we pick $q^*(T)$ to minimize the max of the two values:
\begin{equation}
  q^*(T)=\argmin\limits_{q > \frac{T}{2}}\max\{e^{-D(\frac{T-q+1}{T}\|\frac{2^c}{(2^c-1)^2})T},2^me^{-D(\frac{q}{T}\|\frac{2^c}{(2^c-1)^2})T}\}.
\end{equation} 

The $q^*(T)$ picked in this way can further reduce the number of independent trials.

\subsection*{Represent  $\max_{a, x^{(i)}} \sum_{i} w(a, x^{(i)})$ When the Unweighted $w(a, x^{(i)})$ Are Specified by CNF}

The idea is to add extra binary variables for each replicate. Suppose $w(a, x^{(i)})$ is 1 if and only if the following CNF:
\begin{equation*}
  cl_{i,1} \wedge cl_{i,2} \ldots \wedge cl_{i,p}
\end{equation*}
is satisfiable. In this CNF, $cl_{i,j}$ ($j=1,\ldots,p$) are clauses that range over variables $x^{(i)}$ and $a$. Introduce extra binary variable $y_i$ for replicate $w(x^{(i)}, a)$. Then we can augment clause $cl_{i,j}$ to $$cl_{i,j} \vee (y_i=0),$$ 
and replace the global objective function from $\max_{a,x^{(i)}} \sum_{i} w(a, x^{(i)})$ to: 
$$\max_{a,x^{(i)}} \sum_{i} y_i ~~~\mbox{ subject to }~~~ (cl_{i,1} \vee (y_i =0)) \wedge  \ldots \wedge (cl_{i,p} \vee (y_i=0))~~~~ \forall i.$$
Similar encoding exists when we are solving $\max_{a,x^{(i)}} \sum_{i} w(a, x^{(i)})$ subject to extra parity constraints.

\end{document}